\newtheorem{theorem}{Theorem}
\newtheorem{lemma}{Lemma}
\title{Deep Learning for Subspace Regression}
\author{Vladimir Fanaskov\\
AXXX, AIC\\
\texttt{fanaskov.vladimir@gmail.com} \and
Vladislav Trifonov\\
AI4Science, AIC  \and 
Alexander Rudikov \\
AXXX, INM \and
Ekaterina Muravleva\\
AI4Science, AIC \and 
Ivan Oseledets\\
AXXX, INM
}
\date{}
\begin{document}
\maketitle

\begin{abstract}
It is often possible to perform reduced order modelling by specifying linear subspace which accurately captures the dynamics of the system. This approach becomes especially appealing when linear subspace explicitly depends on parameters of the problem. A practical way to apply such a scheme is to compute subspaces for a selected set of parameters in the computationally demanding offline stage and in the online stage approximate subspace for unknown parameters by interpolation. For realistic problems the space of parameters is high dimensional, which renders classical interpolation strategies infeasible or unreliable. We propose to relax the interpolation problem to regression, introduce several loss functions suitable for subspace data, and use a neural network as an approximation to high-dimensional target function. To further simplify a learning problem we introduce redundancy: in place of predicting subspace of a given dimension we predict larger subspace. We show theoretically that this strategy decreases the complexity of the mapping for elliptic eigenproblems with constant coefficients and makes the mapping smoother for general smooth function on the Grassmann manifold. Empirical results also show that accuracy significantly improves when larger-than-required subspaces are predicted. With the set of numerical illustrations we demonstrate that subspace regression can be useful for a range of tasks including parametric eigenproblems, deflation techniques, relaxation methods, optimal control and solution of parametric partial differential equations.
\end{abstract}

\section{Introduction}
The goal of reduced order modelling (ROM) is to identify uninformative degrees of freedom and discard them \cite{bai2005reduced}. The result is a simplified system that is easier to analyse and simulate. This program is computationally demanding and only justified in the setting when many related problems are repeatedly solved and it is possible to use information from encountered problems to build a reduced model for the problems to come. Typical examples are parametric models for partial and ordinary differential equations (PDEs and ODEs), and usual applications are optimisation, sensitivity analysis, uncertainty quantification and control.

As an illustration consider proper orthogonal decomposition (POD) for time-dependent PDEs \cite{volkwein2013proper}, \cite{hesthaven2022reduced}. To apply POD, one computes solutions for a representative set of parameters and builds a reduced basis for spatial variables by the best low-rank approximation. When new parameters arrive, a computed basis is used to discretise PDE that is solved at reduced cost. In global POD this basis is the same for all incoming parameters and in local POD basis explicitly depends on new parameters. As one may expect, local POD is more expressive than global POD, but can be more challenging to arrange.

POD is an example of the general class of techniques where linear subspace parametrises useful degrees of freedom. While nonlinear ROM techniques exist, linear methods are better understood theoretically, easier to apply in practice, and provide sufficiently well approximation, especially when local versions are available \cite{franco2024deep}. In this setting the main challenge is to construct reliable approximation to the function that maps new parameters to linear subspaces.

We analyse this problem under the following assumptions: (i) the set of parameters of interest is specified in form of probability distribution, (ii) it is known how to compute good or optimal linear subspace for each parameter, (iii) the numerically stable method to construct reduced problem from basis is available. In short, we consider regression on grassmannian. We approach the regression problem by specifying loss function and using neural networks as parametric models to accommodate high-dimensional parameter spaces pervasive in practical problems.

More specifically, our main contributions are:
\begin{enumerate}
	\item Mathematical formulation of subspace regression problem and examples of applications, including eigenspace approximation, local POD, learning basis for deflation and two-grid method, approximating balanced-truncation basis for optimal control problems.
	\item Several loss functions, suitable for neural network training, including the stochastic one that scales well with the increase of subspace size.
	\item Embedding technique: a strategy to learn a larger subspace containing the target one. Empirically, this technique significantly improves accuracy for subspace learning.
	\item Two theoretical justification of embedding technique: derivative of smooth function on Grassmann manifold can be reduced by embedding; complexity of the subspace regression problem for elliptic eigenproblem with constant coefficients.
	\item Empirical evaluation of proposed techniques on a diverse set of problems including comparisons with neural surrogates, kernel methods and classical interpolation in normal coordinates.
\end{enumerate}

\section{Subspace regression}
\label{section:subspace regression}
In this section we formulate precisely what we mean by subspace regression and describe several applications.
\subsection{Definition of subspace regression problem}
In linear space $\mathbb{R}^{n}$ we define $k$-dimensional subspace $\mathcal{S}(W) = \left\{W\alpha, \alpha \in \mathbb{R}^{k} \right\}$ by specifying tall full rank matrix $W\in\mathbb{R}^{n\times k}$. Matrices $W_1$ and $W_2$ represent the same subspace if there is an invertible matrix $G$ such that $W_1 = W_2 G$. The equivalence class of such matrices is denoted by $\lceil W\rceil$. The set of all $k$-dimensional subspace of $n$-dimensional space $\text{Gr}(k, n)$ is known as Grassmann manifold or grassmanian \cite{ciaramella2025gentle}, \cite{bendokat2024grassmann}.

Let $V:\mathbb{R}^{p}\rightarrow \text{Gr}(k, n)$ be a function that maps the space of parameters $r\in\mathbb{R}^{p}$ to the subset of grassmanian represented as the set of tall full rank matrices $V(r)$. We assume that parameters $r$ are sampled from distribution $r \sim p_{r}$ and that dataset $\mathcal{D} = \left\{\left(r_1, V_1\right),\dots,\left(r_m, V_m\right)\right\}$ of $m$ i.i.d. samples is available. For a given parametric model $Y_{\theta}:\mathbb{R}^{p}\rightarrow \text{Gr}(r, n),r \geq k$ we want to identify parameters $\theta^\star$ such that $Y_{\theta^\star}(r)$ approximates $V(r)$\footnote{Note, that we allow $Y_{\theta}$ to have more columns than target $V$. In this context approximation is understood in terms of subspace inclusion $\mathcal{S}\left(V\right) \subset \mathcal{S}\left(W_{\theta}\right)$. As we explain later, redundancy introduced this way can significantly improve accuracy.}. We formulate this task as optimisation problem
\begin{equation}
	\label{eq:subspace regression}
	\theta^\star = \arg\min_{\theta}\left(\underset{r\sim p_r}{\mathbb{E}} \left[L\left(Y_{\theta}(r), V(r)\right)\right]\right) \simeq \arg\min_{\theta}\left(\frac{1}{m}\sum_{i=1}^{m} L\left(Y_{\theta}(r_{i}), V_{i}\right)\right).
\end{equation}
Loss function for subspace regression problem is assumed to have two properties:
\begin{equation}
	\label{eq:loss_properties}
	\begin{split}
		 &L(A, B) = L(\widetilde{A}, \widetilde{B}) \text{ for arbitrary } \widetilde{A} \in \lceil A\rceil,\,\widetilde{B} \in \lceil B\rceil;\\
		 &L(A, B) > 0\text{ and } L(A, B)=0 \text{ iff }\mathcal{S}(B)\subset\mathcal{S}(A).
	\end{split}
\end{equation}
In Section~\ref{section:theoretical results} we provide explicit expression for loss functions with these properties.

Aside from unusual invariance requirement~(\ref{eq:loss_properties}), optimisation problem~(\ref{eq:subspace regression}) is a standard machine learning formulation of regression problems which can be solved with stochastic optimisation for arbitrary model $Y_{\theta}(r)$ that admits efficient evaluation of gradients.

\subsection{Examples of subspace regression problem}
\textbf{Approximate eigenspaces.} 
Consider eigenproblem for Schr\"{o}dinger equation
\begin{equation}
	\label{eq:eigenproblem}
	-\Delta \psi(x) + U(x)\psi(x) = E \psi(x),\,\left\|\psi\right\|_2 < \infty,
\end{equation}
where $U(x)$ is potential energy and $E$ is energy of the system. One way to find eigenpairs is to approximate $\psi(x)$ by a finite series $\psi(x) = \sum_{i=1}^{K}\alpha_i\phi_i(x)$ and enforce Petrov-Galerkin condition that residual is orthogonal to all $\phi_i(x)$. Continuous problem (\ref{eq:eigenproblem}) reduces to eigenproblem for Hermitian matrix and can be solved in $O\left(K^3\right)$ operations \cite{trefethen2022numerical}. For eigenproblems we are typically interested only in extremal eigenspaces corresponding to either smallest or largest eigenvalues. In this case it is desirable to select a small number of basis functions $\phi_i(x)$ that approximate sufficiently well the subspace of interest. When eigenproblem~(\ref{eq:eigenproblem}) is solved repeatedly for many potential functions $U(x)$ this lead us to subspace regression problem (\ref{eq:subspace regression}) used to approximate the mapping $U(x) \rightarrow \left\{ f(x): f(x)=\sum_{i=1}^{K}\alpha_i\phi_i(x), \alpha_i \in \mathbb{C}\right\}$.\footnote{Suitable discretisation of parametrisation of both $\phi$ and $U$ is assumed.} That is, we wish to predict subspace spanned by first $K$ eigevectors. When this mapping is learned from a set of examples, eigenproblems for unobserved potentials $U$ can be solved efficiently, since low-dimensional candidate subspace for eigenfunctions is available.

\textbf{Intrusive POD for time-dependent PDEs.} As an example of time-dependent PDE we use Burgers equation
\begin{equation}
	\label{eq:Burgers}
	\frac{\partial u(x, t)}{\partial t} + u(x, t)\frac{\partial u(x, t)}{\partial x} = \frac{\partial}{\partial x}\left(\nu(x) \frac{\partial u(x, t)}{\partial x}\right),\,u(0, t) = u(1, t) = 0,\,u(x, 0) = u_0(x).
\end{equation}
One starts with spatial discretisation which reduces equation~(\ref{eq:Burgers}) to the set of ODEs and define inner product $\left<\cdot,\cdot\right>_{W}$ for discretised $u(t)$ that approximates $L_2$ inner product. For this set of ODEs the reduced degrees of freedom $\psi_i$ are defined as solution to optimisation problems $\min \int_{0}^{T}dt\left\|u(t) - \left<\psi_i, u(t)\right>_{W}\psi_i \right\|_{W}^2$ subject to $\left<\psi_i, \psi_j\right> = 0,j<i$, $\left<\psi_i, \psi_i\right> = 1$. When discretised, this scheme lead to optimal basis computed with SVD from snapshot matrix \cite{volkwein2013proper}. This basis can only be computed when equation~(\ref{eq:Burgers}) is integrated, so POD is justified only in situation when many related problems are solved. We apply subspace regression with POD to learn the function that maps PDE data to the subspace formed by reduced basis $\left\{\psi_1,\dots,\psi_{k}\right\}$ for some small $k$. Notably, this allows us to apply local POD to high-dimensional parametric problems.

\textbf{Coarse grid correction for iterative methods.} Consider stationary diffusion equation with Dirichlet boundary conditions
\begin{equation}
	\label{eq:stationary_diffusion}
	-\text{div }k(x)\text{ grad }\phi(x) = f(x),\,x\in\Gamma,\,\left.\phi(x)\right|_{\partial \Gamma} = 0.
\end{equation}
When equation (\ref{eq:stationary_diffusion}) is discretised with finite difference or finite element method, it reduces to linear problem $A \phi = f$, where $A$ is large sparse matrix and $\phi, f$ are discretised solution and right-hand side of (\ref{eq:stationary_diffusion}). To exploit sparsity of $A$ one can solve linear equation with relaxation method. General relaxation method split matrix additively $A = D + C$, where $D$ is regular with known inverse \cite{saad2003iterative}. Given the split, if iteration scheme $x^{n+1} = x^{n} + D^{-1}\left(b - Ax^{n}\right)$ is convergent, steady state is exact solution $x = A^{-1}b$. Convergence is linear and its rate is defined by spectral radius of error propagation matrix $I - D^{-1} A$. To improve convergence rate, one can augment iterative method with coarse-grid correction \cite{trottenberg2001multigrid}. This techniques allows one to remove influence of leading subspace formed by colums of matrix $V$ of $I - D^{-1} A$ by solving small reduced linear system for error equation $V^{\top} A V e = r$, where $e$ and $r$ are error and residual in the subspace $\mathcal{S}(V)$. Naturally, subspace regression~(\ref{eq:subspace regression}) for this problem approximates the mapping $A\rightarrow \mathcal{S}(V)$ or $k(x) \rightarrow \mathcal{S}(V)$ for linear systems resulting from equation~(\ref{eq:stationary_diffusion}).

\textbf{Deflation for conjugate gradient.} Krylov subspace methods provide a more systematic way to solve large sparse linear systems \cite{saad2003iterative}. For linear system with symmetric positive definite matrix $A$ resulting from discretisation of equation~(\ref{eq:stationary_diffusion}), the method of choice is conjugate gradient (CG) \cite{hestenes1952methods}. Similar to other Krylov methods, on step $r$, CG identify optimal solution within Krylov subspace $\mathcal{K}_{r} = \text{span}\left\{b, Ab, \dots, A^{r-1}b\right\}$, where $\text{span}$ refers to the subspace formed by linear combinations of vectors in the set. Since powers of $A$ are involved, the most readily available vectors are from the subspaces with large eigenvalues \cite{saad2011numerical}. To improve convergence of method it is reasonably to include eigenspaces $V$ with small eigenvalues to the approximation space $\mathcal{K}_{r}$. The resulting method is deflated CG and the approximation space is $\mathcal{K}_{r} \cup \mathcal{S}(V)$ \cite{saad2000deflated}. In analogy with previous example, the subspace regression problem considered approximates $A\rightarrow \mathcal{S}(V)$ or $k(x) \rightarrow \mathcal{S}(V)$, but this time $V$ spans eigenspaces with small eigenvalues of matrix $A$.

\textbf{Balanced-truncation for linear-quadratic control.} Suppose we want to solve the following linear quadratic control problem
\begin{equation}
	\label{eq:balanced_truncation}
	\begin{split}
	&\dot{y}(t) = A y(t) + B u(t),\,z(t) = C y(t),\\
	&J = \int dt \left(\left(z(t)\right)^\top Q z(t) + \left(u(t)\right)^\top R u(t)\right) + \left(z(T)\right)^\top M z(T),
	\end{split}
\end{equation}
where $y(t)$ is state variable, $u(t)$ is control, $z(t)$ is observable, $A, B, C, Q, R, M$ are matrices of appropriate sizes, $R$ is symmetric positive definite, $Q$ and $M$ are symmetric positive semidefinite. The goal is to find a control signal $u(t), t\in[0, T]$ that minimises cost function $J$.

In the situation when number of state variables $y(t)$ is large, one may want to apply ROM to compute optimal control at a reduced cost. An established way to do that is balanced truncation \cite{moore2003principal}. Roughly speaking, balanced truncation compute a special coordinate system $y(t) = \mathcal{T} \widetilde{y}(t)$ that discounts variables that are both unobservable and uncontrollable, so only a few first columns of matrix $\mathcal{T}$ can be used to accurately model~(\ref{eq:balanced_truncation}). This is done by finding coordinate system that simultaneously diagonalises observability gramian $G_o$ and controlability gramian $G_c$ defined as solutions of Lyapunov equations $A^\top W_o + W_o A + C^\top C = 0,\,AW_c + W_c A^\top + B B^\top = 0$ \cite{moore2003principal}, \cite{volkwein2013proper}. In this case the goal of subspace regression~(\ref{eq:subspace regression}) is to approximate the mapping $A, B, C \rightarrow \mathcal{S}\left(\overline{\mathcal{T}}\right)$, where $\overline{\mathcal{T}}$ is tall matrix assembled from first few columns of $\mathcal{T}$.
\section{Theoretical results}
\label{section:theoretical results}
We proceed by characterising loss functions, introducing the subspace embedding technique and providing its theoretical justification.
\subsection{Loss functions}
Requirements~(\ref{eq:loss_properties}) that allow loss function to work with $\text{Gr}(k, n)$ data enforce right $\text{GL}(k)$ invariance. As a consequence all loss functions introduced below are all based on orthogonal projectors.
\begin{theorem}
	\label{th:loss_functions}
	Let $A\in \mathbb{R}^{n\times k}$, $B \in \mathbb{R}^{n\times p}, p\leq k$ be tall full rank matrices.
	\begin{enumerate}
		\item Loss function $L_1(A, B) = p - \left\| Q_{B}^\top Q_{A}\right\|_{F}^2$ satisfies requirements~(\ref{eq:loss_properties}), where $A = Q_{A}R_{A}, B = Q_{B}R_{B}$ are reduced QR decompositions of $A$ and $B$, $\left\|\cdot\right\|_{F}$ is Frobenius norm\footnote{Reduced QR decomposition of tall full rank matrix $A\in \mathbb{R}^{n\times k}$ is a factorisation $A = Q_{A} R_{A}$, where $Q_{A}\in\mathbb{R}^{n\times k}$ has orthonormal columns, $R_{A}\in\mathbb{R}^{k\times k}$ is upper triangular with nonzero elements on the diagonal.}.
		\item Let $z\in\mathbb{R}^{k}$ be a random variable with zero mean and identity covariance matrix. Loss functions $L_2(A, B; z) = \min_{u}\left\|A u - Q_{B} z\right\|_2^2$ does not depend on the choice of $A$ from $\lceil A\rceil$, where $B = Q_{B}R_{B}$ is QR decomposition. 
		\item On average $L_2$ equals $L_1$, i.e., $\mathbb{E}_{z}\left[ L_2(A, B; z)\right] = L_1(A, B)$.
	\end{enumerate}
\end{theorem}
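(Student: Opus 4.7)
My plan is to route every statement through the orthogonal projector $P_A = Q_A Q_A^\top$ onto $\mathcal{S}(A)$, since $P_A$ depends only on the equivalence class $\lceil A \rceil$. The structural lemma I would use repeatedly is that two QR factorisations of equivalent matrices differ by an orthogonal factor: if $A_1 = A_2 G$ with $G$ invertible, then the columns of $Q_{A_1}$ and $Q_{A_2}$ span the same $k$-dimensional subspace, hence $Q_{A_1} = Q_{A_2} O$ for some $O \in O(k)$.

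For claim (1), this lemma makes $L_1$ well defined on $\lceil A \rceil \times \lceil B \rceil$: the two extra orthogonal factors get absorbed inside the Frobenius norm via $\|MO\|_F = \|M\|_F$, which handles both sides at once. I would then rewrite $\|Q_B^\top Q_A\|_F^2 = \operatorname{tr}(P_A P_B)$ and identify the singular values of the $p\times k$ matrix $Q_B^\top Q_A$ with the cosines of the $p$ principal angles between $\mathcal{S}(B)$ and $\mathcal{S}(A)$. Each cosine lies in $[0,1]$, which gives $\|Q_B^\top Q_A\|_F^2 \leq p$ and hence $L_1 \geq 0$. Equality forces every cosine to equal $1$, equivalently $Q_B^\top (I - P_A) Q_B = 0$; since $I - P_A$ is positive semidefinite this yields $(I - P_A) Q_B = 0$, so every column of $Q_B$ lies in $\mathcal{S}(A)$, i.e.\ $\mathcal{S}(B) \subset \mathcal{S}(A)$.

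For claim (2) I would invoke the normal equations: the least-squares minimum equals $\|(I - P_A) Q_B z\|_2^2$, which is a function of $P_A$ alone and therefore invariant under $A \mapsto A G$. For claim (3), I would expand the quadratic form $L_2 = z^\top Q_B^\top (I - P_A) Q_B z$, take expectation using $\mathbb{E}[zz^\top] = I$, and compute
\[
\mathbb{E}_z[L_2] = \operatorname{tr}\bigl(Q_B^\top (I - P_A) Q_B\bigr) = \operatorname{tr}(Q_B^\top Q_B) - \operatorname{tr}(Q_B^\top Q_A Q_A^\top Q_B) = p - \|Q_B^\top Q_A\|_F^2 = L_1(A, B).
\]

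The plan is essentially mechanical and none of the steps is genuinely difficult; the only point that I would be careful to spell out is the equality case in (1), where the hypothesis $p \leq k$ is essential. With $p \leq k$ the count of principal angles is exactly $p$, so all $p$ cosines equalling $1$ does force $\mathcal{S}(B) \subset \mathcal{S}(A)$ by the projector argument above; this is the only place the dimension inequality in the hypothesis is actually used.
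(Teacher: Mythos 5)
Your proposal is correct, and parts (2) and (3) coincide with the paper's proof essentially line for line (normal equations reduce $L_2$ to $\left\|(I-P_A)Q_Bz\right\|_2^2$, then a trace--expectation swap gives $L_1$). The only real divergence is in part (1). The paper establishes invariance by first proving the identity $L_1(A,B) = \tfrac{1}{2}\left\|P_B - P_A\right\|_F^2 - \tfrac{k-p}{2}$ and appealing to the $\mathrm{GL}$-invariance of projectors, whereas you get it from the observation that QR factors of equivalent matrices differ by an orthogonal factor absorbed by the Frobenius norm; these are interchangeable. For the equality case the routes genuinely differ: the paper chooses representatives with $Q_A = \begin{pmatrix}\widetilde{Q}_B & \widetilde{Q}_B^{\perp}\end{pmatrix}$, i.e.\ it implicitly assumes $\mathcal{S}(A)$ splits into a piece spanned by columns of $Q_B$ plus a piece orthogonal to $\mathcal{S}(B)$, which is not the generic configuration (principal angles strictly between $0$ and $\pi/2$ are possible), so that step is at best loosely argued. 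Your argument --- singular values of $Q_B^\top Q_A$ are cosines in $[0,1]$, so $L_1\geq 0$, and equality forces the positive semidefinite matrix $Q_B^\top(I-P_A)Q_B$ to have zero trace, hence $(I-P_A)Q_B=0$ --- is fully general and is the cleaner way to close that case. The one small thing you should still write down is the converse direction of the equality claim: if $\mathcal{S}(B)\subset\mathcal{S}(A)$ then $P_AQ_B=Q_B$ and $\left\|Q_B^\top Q_A\right\|_F^2 = \operatorname{tr}\bigl(Q_B^\top P_A Q_B\bigr)=p$, so $L_1=0$; it is immediate in your framework but the theorem's ``iff'' requires it.
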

\begin{proof}Appendix~\ref{appendix:theorem_1}.
\end{proof}
Loss $L_1$ is essentially the same as the difference of orthogonal projectors. Note, that $L_1(A, B)\geq 0$ with equality reached if and only if matrices $A$ and $B$ share the same columns space, since in this case $\left\| Q_{B}^\top Q_{A}\right\|_{F}^2 = p$. Loss $L_2$ introduces two modifications: (i) projector in Riemannian distance is replaced with error of least squares problem; (ii) to remove second projector, stochastic Hutchinson trace estimation is used. Reformulation with least square problem allows one to use normal equation, and various tools from randomised numerical linear algebra, e.g., randomised preconditioned Cholesky-QR \cite{garrison2024randomized}, blendenpik solver \cite{avron2010blendenpik}, and sketching \cite{woodruff2014sketching}. We will see in Section~\ref{section:numerical_experiments}, that loss function $L_2(A, B)$ based on normal equation scales better than $L_1(A, B)$ with the increase of subspace size.
\subsection{Subspace embedding}
\label{section:subspace_embedding}
In the definition of subspace regression problem~(\ref{eq:subspace regression}) we allow to approximate function $\mathbb{R}^{p}\rightarrow \text{Gr}(k, n)$ by function $\mathbb{R}^{p}\rightarrow \text{Gr}(r, n)$ where $r\geq k$. We call this strategy subspace embedding. It is justified because of two unique properties of regression and interpolation on grassmanian: (i) inclusion of vector subspaces is well-defined; (ii) subspace, predicted by regression model or interpolated by standard techniques, is used to construct a reduced model. From the latter property one may expect similar or improved accuracy when the predicted subspace from $\text{Gr}(r, n)$ contains target subspace from $\text{Gr}(k, n)$.

We will show empirically in Section~\ref{section:numerical_experiments} that subspace embedding significantly improves accuracy and generalisation gap. Here we argue that prediction of larger-than-needed subspaces align well with inductive bias of neural networks known as f-principle or spectral bias \cite{xu2019frequency}. F-principle is an observation that neural networks tend to learn smoothed versions of the target functions. As we show below, embedding techniques may improve smoothness of learned function.
\begin{theorem}
	\label{th:embedding_technique}
	Let $V:\mathbb{R}\rightarrow \text{Gr}(k, n)$ be continuously differentiable on $t\in[0, T]$, $V(t)^\top V(t) = I_{k}$. It is always possible to construct piecewise continuous function $W:\mathbb{R}\rightarrow \text{Gr}(r, n)$, $r>k$, $W(t)^\top W(t) = I_{r}$ such that $\frac{1}{2}\left\|W(t) W(t)^{\top} - V(t) V(t)^{\top}\right\|_{F}^2 - \frac{r-k}{2}$ is arbitrary small and $\left\|\dot{W}(t)\right\|_{F}^2 \leq \left\|\dot{V}(t)\right\|_{F}^2$, where inequality is strict for all points where $\left\|\dot{V}(t)\right\|_{F}^2 \neq 0$.
\end{theorem}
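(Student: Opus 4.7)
The plan is to take $W$ piecewise constant in $t$, so the derivative part becomes trivial: $\dot W \equiv 0$ on every open subinterval, giving $\|\dot W(t)\|_F^2 = 0 \leq \|\dot V(t)\|_F^2$ with strict inequality wherever $\|\dot V(t)\|_F^2 > 0$. All that remains is to make the distance term uniformly small. Using $W^\top W = I_r$ and $V^\top V = I_k$, a direct trace computation gives $\tfrac{1}{2}\|WW^\top - VV^\top\|_F^2 = \tfrac{r+k}{2} - \|V^\top W\|_F^2$, so the quantity of interest simplifies to $k - \|V(t)^\top W(t)\|_F^2 \geq 0$, which vanishes iff $\mathcal{S}(V(t)) \subset \mathcal{S}(W(t))$.

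For any $\epsilon > 0$, I would partition $[0, T]$ into $N$ subintervals of width $\delta = T/N$, and on $[t_i, t_{i+1}]$ set $W(t) \equiv W_i := [\,V(t_i) \mid U_i\,]$, where $U_i \in \mathbb{R}^{n \times (r-k)}$ is any orthonormal basis of any $(r-k)$-dimensional subspace of $\mathcal{S}(V(t_i))^\perp$ (such a subspace exists provided $r \leq n$, which is implicit since $\text{Gr}(r,n)$ is nonempty). By construction $W_i^\top W_i = I_r$ and $\mathcal{S}(V(t_i)) \subset \mathcal{S}(W_i)$, and the latter inclusion yields $\|V(t)^\top W_i\|_F^2 = \|V(t)^\top V(t_i)\|_F^2 + \|V(t)^\top U_i\|_F^2 \geq \|V(t)^\top V(t_i)\|_F^2$, so it is enough to bound $k - \|V(t)^\top V(t_i)\|_F^2$ for $t \in [t_i, t_{i+1}]$.

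Since $V$ is $C^1$ on the compact interval $[0,T]$, $M := \sup_t \|\dot V(t)\|_F$ is finite and $\|V(t) - V(t_i)\|_F \leq M \delta$. Writing $VV^\top - V_iV_i^\top = V(V-V_i)^\top + (V-V_i)V_i^\top$ and using that $V, V_i$ have orthonormal columns gives the routine Lipschitz-type bound $\|VV^\top - V_iV_i^\top\|_F \leq 2 \|V - V_i\|_F \leq 2 M \delta$. Applying the same trace identity with $r=k$ yields $k - \|V(t)^\top V(t_i)\|_F^2 = \tfrac{1}{2}\|VV^\top - V_iV_i^\top\|_F^2 \leq 2 M^2 \delta^2$, which is less than $\epsilon$ as soon as $N$ is large enough. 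The only nontrivial step is the inclusion-based inequality $\|V(t)^\top W_i\|_F^2 \geq \|V(t)^\top V(t_i)\|_F^2$: this is precisely where the embedding $k \to r$ pays off, since appending orthonormal columns to $V(t_i)$ can only grow the projected norm and hence shrink the loss below what $V(t_i)$ alone achieves, and it is what lets us trade the usual derivative cost of chasing $V(t)$ for the constant cost of an oversized static basis.
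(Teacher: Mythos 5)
Your argument is, as far as the literal statement goes, correct: the trace identity $\tfrac{1}{2}\|WW^\top-VV^\top\|_F^2-\tfrac{r-k}{2}=k-\|V^\top W\|_F^2$ is right, the inclusion $\mathcal{S}(V(t_i))\subset\mathcal{S}(W_i)$ does give $\|V(t)^\top W_i\|_F^2\geq\|V(t)^\top V(t_i)\|_F^2$, and the Lipschitz bound $k-\|V(t)^\top V(t_i)\|_F^2\leq 2M^2\delta^2$ closes the approximation part, while piecewise constancy makes the derivative inequality automatic. However, this is a genuinely different — and in an important sense degenerate — route from the paper's, and the difference is worth spelling out. Because your $W$ is locally constant, the extra $r-k$ columns $U_i$ do no work at all: the identical argument with $r=k$ and $W_i=V(t_i)$ proves the same conclusion, so your proof shows that the statement, read literally, does not actually require the embedding $r>k$. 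The paper's proof is built to demonstrate something stronger that the statement is meant to convey: it first proves a lemma that any geodesic $A(t)$ on $\mathrm{Gr}(k,n)$ can be embedded into a geodesic $B(t)$ on $\mathrm{Gr}(k+1,n)$ that contains $A(t)$ \emph{exactly for every} $t$ (all principal angles zero along the whole curve) while $\|\dot B\|_F^2=\|\dot A\|_F^2-\sigma_1<\|\dot A\|_F^2$ — the appended static column absorbs the largest principal velocity — and then splices such embedded geodesics over a fine partition of $[0,T]$. There the resulting $W$ is non-constant, tracks $V$ continuously within each piece, and its derivative reduction comes specifically from the added dimension; that is the content used to argue that embedding produces a \emph{smoother moving} target for the network. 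Your construction buys brevity and full rigour for the statement as written, but it trades away exactly the mechanism the theorem is supposed to exhibit, so if you intend it as a justification of the embedding technique (rather than of the bare statement) you would need to rule out the piecewise-constant loophole, e.g.\ by requiring $W$ to be continuous, or to be nonconstant/geodesic on each piece, as the paper's construction implicitly does.
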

\begin{proof}Appendix~\ref{appendix:theorem_2}; See Appendix~\ref{appendix:embedding_example} for subspace embedding example.
\end{proof}
Theorem~\ref{th:embedding_technique} implies that, by increasing the subspace size, one can always approximate continuously differentiable functions arbitrarily well and simultaneously reduce its derivative. F-principle suggests that the latter property makes learning easier for neural networks.
\subsection{Complexity of parametric eigenproblem}
To illustrate difficulties one may encounter and to further justify embedding technique we consider complexity of subspace regression problem for parametric elliptic eigenproblem with constant coefficient
\begin{equation}
	\label{eq:parametric_eigenproblem}
	-\sum_{i=1}^{D} a_i \frac{\partial^2 \phi_{i_1,\dots,i_D}(x_1,\dots, x_D)}{\partial x_i^2} = \lambda_{i_1,\dots,i_D}\phi_{i_1,\dots,i_D}(x_1,\dots, x_D),
\end{equation}
where $a_i > 0$, $x_i\in[0, 1]$ and Dirichlet boundary conditions are assumed.

For problem~(\ref{eq:parametric_eigenproblem}) general eigenfunction is $\phi_{i_1,\dots,i_D}(x_1,\dots, x_D) = \prod_{j=1}^{D}\sin(\pi i_j x_j)$ and the set of eigenfunctions does not depend on coefficients $a_i$. Observe that $\lambda_{i_1,\dots,i_D} = \sum_{j=1}^{D} a_j\left(\pi i_j\right)^2$, so coefficients $a_i$ define the order of eigenvectors. Below we formally characterise mapping from coefficients to $k$-th eigenvector and eigenspace.
\begin{theorem}
	\label{th:parametric_eigenproblem}
	Suppose eigenvectors of~(\ref{eq:parametric_eigenproblem}) are ordered according to the increase of eigenvalues. Let $\phi_k$ be an eigenvector on position $k$, let $V_{k}$ be an eigenspace spanned by vectors on positions up to $k$. Consider mappings $F_{k}: a_1,\dots, a_D \rightarrow \phi_k$ and $G_{k}: a_1,\dots, a_D \rightarrow V_k$.
	\begin{enumerate}
		\item $F_{k}, G_{k}$ are piecewise constant functions that map real numbers to elements of sets $S_{F_{k}}, S_{G_{k}}$. Sets $S_{F_{k}}, S_{G_{k}}$ are finite with $\#_{F_{k}}(k, D), \#_{G_{k}}(k, D)$ distinct elements.
		\item Let $W_l, l>1$ be a subspace obtained by union of $V_l$ for distinct $a_1,\dots,a_D$. Number of vectors in $W_{l}$ is $\#_{F_{k}}(l, D) + 1$.
		\item $\#_{F_{k}}(k, D)\sim\frac{1}{(D-1)!} k \left(\log k\right)^{D-1}$ for fixed $D$ and large $k$ .
		\item $\#_{F_{k}}(k, D) \leq k D^{\log_2 k}$ for fixed $k$ and large $D$.
		\item $\#_{G_{k}}(k, D) \geq \frac{1}{(D-1)!}k^{D-1}$ for fixed $D$ and large $k$.
		\item $\#_{G_{k}}(k, D) \geq \frac{1}{(k-1)!}D^{k-1}$ for fixed $k$ and large $D$.
	\end{enumerate}
	Where $\sim$ is asymptotic expansion and $\geq, \leq$ are lower and upper bound on leading asymptotic.
\end{theorem}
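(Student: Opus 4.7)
The plan is to translate every statement into combinatorics on $\mathbb{Z}_{>0}^{D}$ by exploiting the linearity of $\lambda(i, a) = \pi^{2} \sum_{l} a_{l} i_{l}^{2}$ in $a$. Each equality $\lambda(i, a) = \lambda(j, a)$ carves out a hyperplane in $(0, \infty)^{D}$, and the ordering of eigenvalues is constant on each connected component of the complement, giving the piecewise constant claim of statement~1. Finiteness rests on the \emph{box argument}: any $j$ with $j_{m} \leq i_{m}$ for all $m$ and $j \neq i$ satisfies $\lambda(j, a) < \lambda(i, a)$ for every $a$, so the position of $i$ is at least $\prod_{l} i_{l}$; hence only multi-indices with $\prod_{l} i_{l} \leq k$ can ever sit at position $k$.

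For statement~2 the key intermediate claim I would prove is that, for every $i \neq (1, \dots, 1)$, the set of positions attainable by $i$ as $a$ varies equals the interval $\{\prod_{l} i_{l}, \prod_{l} i_{l} + 1, \dots\}$. The lower endpoint is the box argument. Unboundedness follows by picking any $l$ with $i_{l} \geq 2$ and sending $a_{l} \to \infty$, which forces arbitrarily many $j$ with $j_{l} = 1$ below $i$. Contiguity follows because along a generic smooth path only one relation breaks at a time, and at a generic crossing $\lambda(p, a) = \lambda(q, a)$ no third index can coincide (codimension two), so $p$ and $q$ are adjacent in the ordering and the position of $i$ changes by at most one. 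This gives: a nontrivial $\phi_{i}$ lies in $V_{l}(a)$ for some $a$ iff $i$ sits at position exactly $l$ for some $a$, and together with the always-first $\phi_{(1, \dots, 1)}$ one obtains $\dim W_{l} = \#_{F}(l, D) + 1$.

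The same characterisation identifies $\#_{F}(k, D)$ with the count of $i \in \mathbb{Z}_{>0}^{D} \setminus \{(1, \dots, 1)\}$ satisfying $\prod_{l} i_{l} \leq k$, that is $\sum_{n=1}^{k} d_{D}(n) - 1$ with $d_{D}$ the $D$-dimensional divisor function. Statement~3 is then the classical Piltz--Dirichlet asymptotic $\sum_{n \leq k} d_{D}(n) \sim \frac{k (\log k)^{D-1}}{(D-1)!}$, and statement~4 follows from the elementary bound $d_{D}(n) \leq D^{\Omega(n)} \leq D^{\log_{2} n}$ (with $\Omega$ counting prime factors with multiplicity) and crude summation. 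For the two lower bounds on $\#_{G}$ the plan is to exhibit the explicit realisable family of \emph{star shapes} $\Lambda_{(m_{1}, \dots, m_{D})} = \{(1, \dots, 1)\} \cup \bigcup_{l} \{(1, \dots, 1, j{+}1, 1, \dots, 1) : 1 \leq j \leq m_{l}\}$ indexed by nonnegative tuples with $\sum_{l} m_{l} = k - 1$. Setting $a_{l} = 1/((m_{l} + 1)^{2} - 1)$ for $m_{l} \geq 1$ and $a_{l}$ sufficiently large otherwise makes every tip share the same excess $\lambda - \lambda(1, \dots, 1) = 1$, while a short case analysis over longer tips, corners of the form $(\dots, 2, \dots, 2, \dots)$, and inactive directions shows every outside index has strictly larger excess; hence $\#_{G}(k, D) \geq \binom{k + D - 2}{D - 1}$, which specialises to $\frac{k^{D-1}}{(D-1)!}$ for fixed $D$ (statement~5) and to $\frac{D^{k-1}}{(k-1)!}$ for fixed $k$ (statement~6).

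The hardest step will be the contiguity claim in statement~2: one has to make the generic-path argument rigorous despite the fact that the hyperplane arrangement is infinite and the position of $i$ itself is unbounded as $a$ moves, so the path must be chosen to avoid every relevant codimension-two stratum while carrying $i$ continuously from a configuration achieving the minimum position to one realising any prescribed larger value. Once this is in hand everything else reduces to bookkeeping: the box argument, standard number-theoretic estimates for $d_{D}$, and the explicit verification that star shapes are realisable.
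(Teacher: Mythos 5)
Your overall architecture matches the paper's: reduce everything to lattice-point combinatorics on $\mathbb{Z}_{>0}^{D}$, identify $\#_{F}$ with a divisor-sum $\sum_{n\le k}d_{D}(n)$, get part~3 from the Piltz--Dirichlet asymptotic (which the paper rederives by Euler--Maclaurin recurrences), part~4 from $d_{D}(n)\le D^{\Omega(n)}\le D^{\log_2 n}$, and parts~5--6 from an explicit realisable family. Your ``box argument'' is in fact cleaner than the paper's minimal-volume-ellipsoid computation: it gives the exact inequality $\text{position}(i)\ge\prod_l i_l$ rather than a continuum heuristic. However, there are two genuine problems.

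First, your intermediate claim that the attainable positions of $i$ are exactly $\{\prod_l i_l,\prod_l i_l+1,\dots\}$ is false: for $i=(5,2)$ in $D=2$, position $10$ would require both $36a_1+a_2>25a_1+4a_2$ (to keep $(6,1)$ out) and $a_1+9a_2>25a_1+4a_2$ (to keep $(1,3)$ out), i.e.\ $a_2/a_1<11/3$ and $a_2/a_1>24/5$ simultaneously, which is impossible. The contiguity-plus-unboundedness argument still yields that the attainable positions form an interval $[m(i),\infty)$ with $m(i)\ge\prod_l i_l$, which is all part~2 needs; but the exact identification $\#_{F}(k,D)=\sum_{n\le k}d_{D}(n)-1$ then only holds as an upper bound, so your part~4 survives while part~3 loses its matching lower bound. (To be fair, the paper has the same gap and explicitly waves it away as a ``continuous relaxation''; you should at least not state the exact-interval claim as provable.)

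Second, and more seriously, the star-shape family for part~5 does not work. With your choice $a_l=1/((m_l+1)^2-1)$ and two arms of length $m_l=m_{l'}=2$, the corner $(\dots,2,\dots,2,\dots)$ has excess $3a_l+3a_{l'}=3/8+3/8=3/4<1$ and therefore enters $V_k$ before the tips. Worse, such stars are unrealisable by \emph{any} $a$: in $D=2$ the star $\{(1,1),(2,1),(3,1),(1,2),(1,3)\}$ requires $4a_1+4a_2>\max(9a_1+a_2,\,a_1+9a_2)\ge 5a_1+5a_2$, a contradiction. Only stars with at most one arm of length $\ge 2$ are realisable, and for fixed $D$ and large $k$ there are $O(1)$ of those --- nowhere near $k^{D-1}/(D-1)!$. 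So part~5 fails as written; part~6 survives because there only the $m_l\in\{0,1\}$ stars are needed, giving $\binom{D}{k-1}\sim D^{k-1}/(k-1)!$, which coincides with the paper's construction. For part~5 the paper instead counts staircase (Young-diagram-like) down-sets swept out by varying the aspect ratios of the level ellipsoid, indexed by weak compositions of $k$ into $D$ parts; you would need to replace your stars by such a family.
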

\begin{proof}Appendix~\ref{appendix:theorem_3}; See Appendix~\ref{appendix:monte_carlo_example} for examples.
\end{proof}
Theorem~\ref{th:parametric_eigenproblem} suggests that for problem~(\ref{eq:parametric_eigenproblem}) mappings from coefficient to $k$-th eigenvector $\phi_k$ or subspace $V_k$ are piecewise constant functions with rapidly growing number of constant regions when either $k$ or $D$ increases. The complexity (the number of regions) of the subspace prediction problem exceeds the complexity of $k$-th eigenvector prediction. However, results also suggest that the number of unique eigenvectors within $V_k$ grows at the same rate as the number of eigenvectors on position $k$. This means, the large number of distinct regions in $G$ comes from a large number of possible combinations of an asymptotically small number of vectors. Given that, the complexity of mapping from coefficients to subspace decreases, if one predicts subspace of larger dimension $\widetilde{V}_k \supseteq V_k$. For example, if one is willing to predict subspace of dimension $\#_{F}(k, D)$ in place of $V_k$ of dimension $k$, the mapping $a_1,\dots, a_D \rightarrow \widetilde{V}_k\supseteq V_k$ may be chosen to have constant value.

\section{Numerical experiments}
\label{section:numerical_experiments}
We present several numerical experiments to corroborate our theoretical findings. The discussion of control problems appears in Appendix~\ref{appendix:icontrol_numerics}. All numerical results are reported as single-run metrics without explicit error bars. To study sensitivity to train-test split we perform several dedicated experiments. Results suggest that variance is low and does not affect main conclusions. The details are available in Appendix~\ref{appendix:sensetivity}. For most problems we report relative error measured in percents. For predicted quantity $\widetilde{v}$ and ground truth value $v$ it reads $100\%\times\left\|v - \widetilde{v}\right\|_2 \big/\left\| v\right\|_2$. For eigenvalue problems the numerator is $\mathbb{Z}_2$ adjusted as explained below. For iterative methods we report convergence plots for relative error, without additional factor $100\%$.

\subsection{Eigenspace prediction}
We considered several eigenvalue problems: (i) $D=1$ eigenproblem with Schr\"{o}dinger operator (\ref{eq:eigenproblem}) with parametric family of expanded Morse oscillator \cite{le2006accurate}, (ii) $D=2$ Schr\"{o}dinger operator (\ref{eq:eigenproblem}) with parametric family of two expanded Morse oscillators \cite{carpenter2018dynamics}, (iii) $D=2$ two datasets, $k_1 = k_2$ and $k_1\neq k_2$, for elliptic eigenproblem (\ref{eq:stationary_diffusion}) (left-hand side of the equation) with contrast coefficient sampled from gaussian random field, (iv) $D=3$ dataset for elliptic eigenproblem with diffusion coefficient $k_1 = k_2$.
In all experiments we used FFNO architecture \cite{tran2021factorized} , a modification of FNO \cite{li2020fourier}, and performed extensive hyperparameter grid search. Details on dataset generation and training protocol are available in Appendix~\ref{appendix:eigenproblems_numerics}. To contextualise subspace regression we provide results for two baselines.

\textbf{Regression with $\mathbb{Z}_2$ adjusted $l_2$ loss.} Eigenvectors are defined up to a sign, so in place of subspace losses specified in Theorem~\ref{th:loss_functions} one can try to directly predict eigenvectors with $\mathbb{Z}_2$ adjusted $l_2$ loss $l_{\mathbb{Z}_2}(v, u) = \min_{\pm}\left\|v \pm u\right\|_2$.

\textbf{Interpolation in Riemannian normal coordinate system.} A standard technique of manifold interpolation applied to grassmannian \cite{amsallem2010interpolation}, \cite{ciaramella2025gentle}, \cite{zimmermann2019manifold}. For a given query, $k$ closest points are selected from the training set. One point supplies common tangent space, i.e., it is used to compute logarithms for the remaining points. Since logarithms lay in the same tangent space they can be interpolated with any techniques desirable (we use RKHS \cite{bishop2006pattern}). After interpolation of logarithms, the exponential map is computed.

Additional results are available in Appendix~\ref{appendix:eigenproblems_numerics}. Here we highlight several interesting trends.

\label{subsection:eigenspaces}
\begin{figure}[t]
	\centering
	\begin{subfigure}[t]{0.3\textwidth}
        		\centering
		\hspace*{-0.7cm}
		\includegraphics[width=1.05\textwidth]{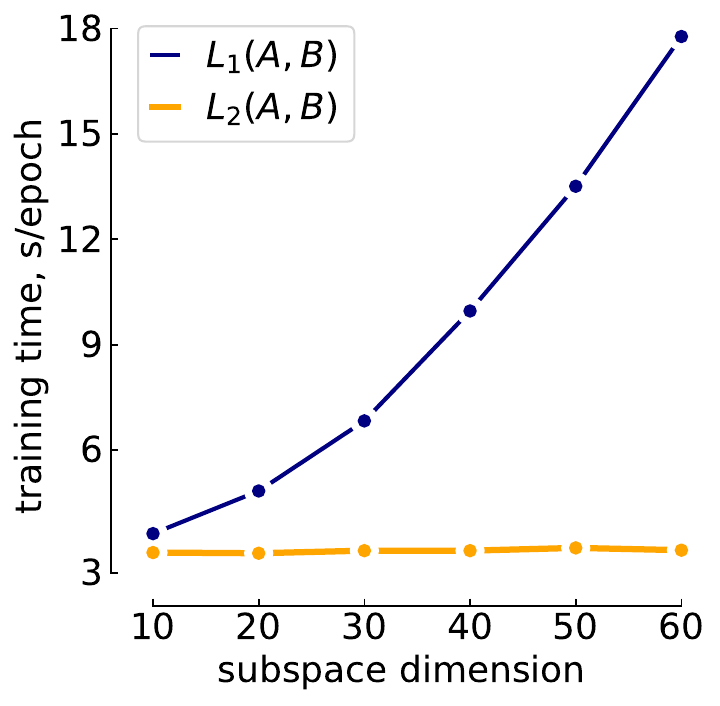} 
		\caption{training time}
		\label{fig:eigval_results_a}
	\end{subfigure}
	~
	\begin{subfigure}[t]{0.3\textwidth}
        		\centering  
		\includegraphics[width=0.98\textwidth]{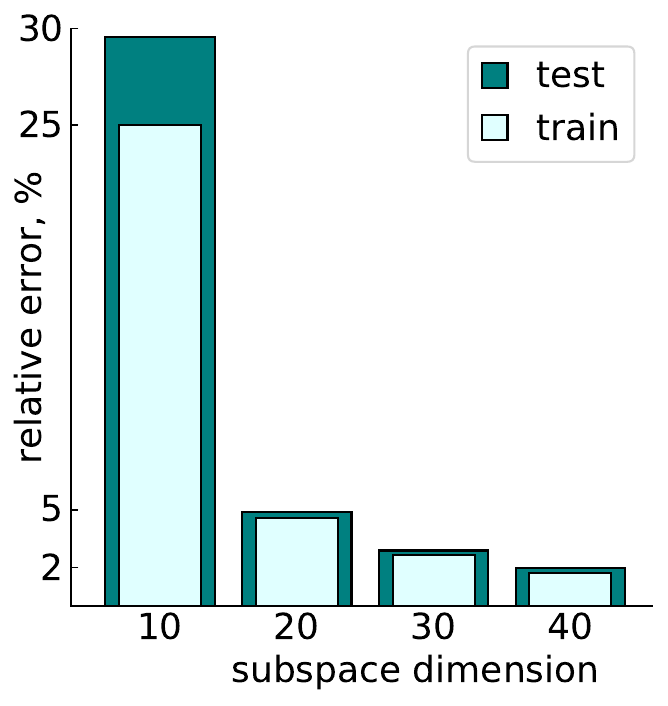}
		\caption{subspace embedding}
		\label{fig:eigval_results_b}
	\end{subfigure}
	~
	\begin{subfigure}[t]{0.3\textwidth}
        		\centering
		\hspace*{-0.3cm}  
		\includegraphics[width=1.025\textwidth]{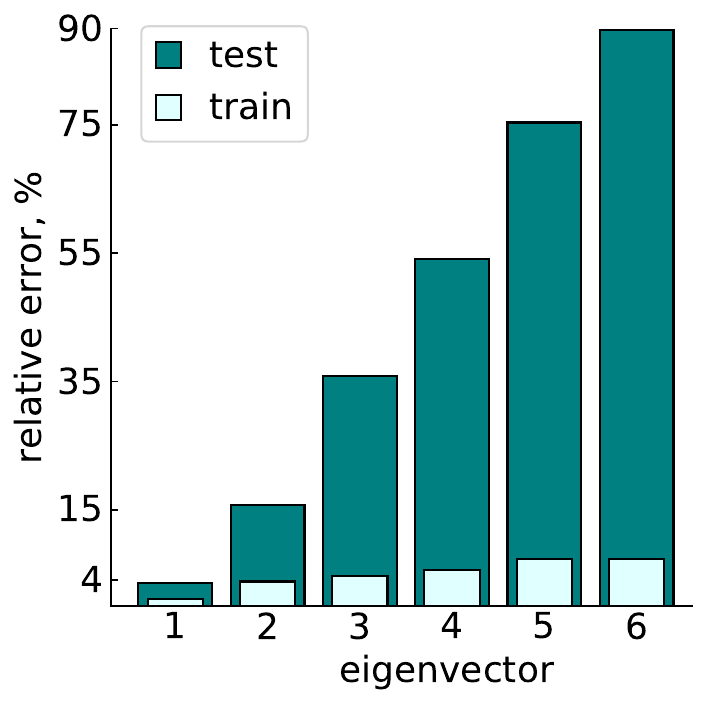}
		\caption{$\mathbb{Z}_2$ adjusted $l_2$}
		\label{fig:eigval_results_c}
	\end{subfigure}
	\caption{Selected results for eigenspace prediction: (a) Comparison of training time for losses $L_1(A, B)$, $L_2(A, B; z)$ from Theorem~\ref{th:loss_functions}. On $D=2$ grid $N_x = N_y = 100$ we observe $L_2(A, B; z)$ scales better with dimension size; (b) Illustration of subspace embedding technique from Section~\ref{section:subspace_embedding} for $D=2$ elliptic eigenproblem, prediction of first $10$ eigenvectors. Prediction of larger subspace manifestly improves accuracy and reduces generalisation gap; (c) Relative error for individual eigenvector predictions for the same problem as in (b) but trained with $\mathbb{Z}_2$-adjusted $l_2$ loss. Similarly to results of Theorem~\ref{th:parametric_eigenproblem} we observe a steep increase of problem complexity with eigenvector number. See Section~\ref{subsection:eigenspaces} for details.}
\end{figure}

\textbf{Subspace losses are unsuccessful without subspace embedding technique.} Figure~\ref{fig:eigval_results_b} contains results of learning subspace spanned by first $10$ eigenvectors for $D=2$ on grid $100\times 100$ elliptic eigenproblem with $L_2(A, B; z)$ (loss $L_1(A, B)$ leads to the same accuracy). Neural network predicts subspace of sizes $10$, $20$, $30$, $40$ according to subspace embedding strategy Section~\ref{section:subspace_embedding}. Results demonstrate subspace embedding is efficient in decreasing test error from $30\%$ for subspace of dimension $10$, to test error $2\%$ for subspace of dimension $40$ ($0.4\%$ of the total number of degrees of freedom). It is less clear from Figure~\ref{fig:eigval_results_b}, but the generalisation gap also systematically improves, suggesting that complexity of the problem decreases. Similar conclusions are valid for $D=2$ QM and $D=3$ eigenproblems.

\begin{wraptable}{r}{6.5cm}
\caption{Relative errors for QM problems.}
\label{table:QM_summary}
\begin{tabular}{llll}
\toprule
dataset & interp. & $L_{\mathbb{Z}_2}$ & $L_1(A, B)$ \\
\midrule
$D=1$ & $4.69\%$ & $2.33\%$ & $0.09\%$ \\
$D=2, a$ & $31.9\%$ & $19.52\%$ & $0.65\%$ \\
$D=2, b$  & $92.64\%$ & $48.56\%$ & $15.58\%$ \\
\bottomrule
\end{tabular}
\end{wraptable} 

\textbf{Classical interpolation is not competitive.}  In Table~\ref{table:QM_summary} we gather results (best for each method) for one $D=1$ and two $D=2$ QM problems. Classical interpolation is reasonably accurate only on the simplest problem in $D=1$, but $L_{\mathbb{Z}_2}$ loss still results in better accuracy. The reason is likely classical interpolation struggles in high-dimensional subspaces because observations are too sparse to naively approximate tangent space in the region of interest by finding nearest neighbours. On QM datasets accuracy of subspace regression is consistently better than for other approaches.

\textbf{Loss without $QR$ scales much better for larger subspace sizes.} In Figure~\ref{fig:eigval_results_a} we demonstrate wall clock training time for $L_1(A, B)$ and $L_2(A, B;z)$ (least squares problem is solved with normal equation) per epoch on the same hardware for the same FFNO architectures. For small subspace sizes the training time the methods are roughly on par, but with the increase of subspace size QR starts to drastically slow down training with $L_1$ loss.

\textbf{Training with loss $L_{\mathbb{Z}_2}$ is reasonable only for several first eigenvectors.} In Figure~\ref{fig:eigval_results_c} we present results for learning individual eigenvectors (a separate network is trained for each eigenvector) for $D=2$ elliptic eigenproblem.Train error is reasonably small for all eigenvectors, which imply neural networks can successfully approximate them. Rapid growth of the test error with eigenvector number indicates the increase of problem complexity in agreement with Theorem~\ref{th:parametric_eigenproblem}.

\textbf{Neural networks trained with subspace embedding technique learn smoother maps.} Theorem~\ref{th:embedding_technique} suggests it is possible to decrease derivative by embedding of geodesics into a larger space. This provides only a circumstantial evidence that the same may happen when neural networks are trained with subspace embedding technique. In Appendix~\ref{appendix:eigenproblems_numerics} we gather empirical results that support such conclusion. The results are based on several ``smoothness indicators'': the error of linear model, Frobenius norm of derivative, and mean cosine of angles between subspaces at nearby points. We refer interested readers to Appendix~\ref{appendix:smoothness_results}.

\textbf{Subspace regression can speed-up classical iterative eigensolvers.}
As an example of hybrid approach we consider combination of subspace regression and LOBPCG \cite{knyazev2001toward}. LOBPCG is a classical iterative matrix-free eigensolver that can approximate extremal eigenspaces. To apply it in combination wit hsubspace regression we use trained neural network to predict subspace, and initialise LOBPCG with solution of reduced eigenproblem. Note, that the cost of such initialisation is negligible small comparing to the full cost of LOBPCG iterations. We observe $2$ to $3$ times faster converges and $2$ orders lower relative error on average when subspace regression is used for initialisation. More details are available in Appendix~\ref{appendix:LOBPCG_results}.

\begin{wraptable}{r}{7.5cm}
\caption{Accuracy for $D=3$ elliptic eigenproblem.}
\label{table:D3_problems}
\begin{tabular}{llll}
\toprule
$N_{\text{sub}}$ & $L_1(A, B)$ & $L_2(A, B; z)$ & $L^{\text{stab}}_2(A, B; z)$ \\
\midrule
$6$ & $24.77\%$ & $31.46\%$ & $28.28\%$ \\
$12$ & $13.69\%$ & $17.12\%$ & $15.88\%$ \\
$24$  & $9.71\%$ & $-$ & $9.49\%$ \\
$48$  & $7.54\%$ & $16.3\%$ & $7.4\%$ \\
\bottomrule
\end{tabular}
\end{wraptable} 

\textbf{Loss $L_2(A, B; z)$ may become unstable.} From results summarised in Figure~\ref{fig:eigval_results_a} one can assume that $L_2(A, B; z)$ is always preferable. Results for $D=3$ problem (elliptic eigenproblem, grid $30\times30\times30$, prediction of first $3$ eigenvectors) summarised in Table~\ref{table:D3_problems} reveal a more nuanced picture. Loss $L_2(A, B; z)$ clearly performs worse than $L_1(A, B)$ and even fails for subspace size $24$. The reason for that is numerical instability of solvers based on the normal equation.
To stabilize $L_2(A, B; z)$ we apply Cholesky-QR2 \cite{yamamoto2015roundoff}. The results for stabilised loss shows that accuracy becomes comparable to $L_1(A, B)$ and even slightly better for larger subspace dimensions.

\subsection{Parametric PDE problems}
We considered two PDEs: (i) $D=1+1$ viscous Burgers equation, related to benchmark from \cite{li2020fourier};  (ii) $D=2$ elliptic problems (\ref{eq:stationary_diffusion}). Our main operator is FFNO and the solutions strategy we use is classical intrusive POD\footnote{Recall, that when coefficients in the reduced basis expansion are predicted by some model we have non-intrusive POD. When basis is used to generate reduced ODE that is later integrated we have intrusive POD.}. For datasets description and training details see Appendix~\ref{appendix:PDEs_numerics}. We compare subspace regression with several methods.

\textbf{Regression with FFNO.} We apply FFNO, an extension of Fourier Neural Operator, to parametric PDEs in a standard way similar to \cite{tran2021factorized}.

\textbf{Regression with DeepONet.} Classical architecture based on the universal approximation theorem of operators \cite{lu2019deeponet}. DeepONet can be understood as end-to-end training of non-intrusive POD with basis functions parametrised by implicit neural representation or physics-informed neural networks \cite{sitzmann2020implicit}, \cite{lagaris1998artificial}, \cite{raissi2019physics}.

\textbf{Intrusive POD with DeepONet/FFNO basis.} When DeepONet is trained, learned spatial or spatiotemporal basis functions can be used to extract basis (by method directly related to POD) suitable for spectral methods or intrusive POD \cite{meuris2021machine}, \cite{meuris2023machine}. As suggested in the discussion section of \cite{meuris2021machine}, the same can be done with FNO.

\textbf{Deep POD.} Projector-based loss is used directly to extract basis from available snapshot matrices or steady-state solutions \cite{francodeep}. In the referenced publication this approach is combined with PCA-Net described below.

\textbf{Kernel methods.} A non-parametric technique where the RKHS method is used for encoder, processor and decoder \cite{batlle2024kernel}.

\textbf{PCA-Net.} A non-intrusive technique with classical POD used as both encoder and decoder, and feedforward network served as processor \cite{hesthaven2018non}, \cite{bhattacharya2021model}.

\textbf{POD and oracle.} Two POD-based baselines. POD is an intrusive variant of global POD \cite{volkwein2013proper}. Oracle is an intrusive variant of local POD computed with optimal subspace. In problems we consider, error achieved by oracle is the best possible for a given number of basis vectors.

Additional results are available in Appendix~\ref{appendix:PDEs_numerics}. Here we highlight the most important findings.

\begin{figure}[t]
	\centering
	\hspace*{-0.4cm}
	\begin{subfigure}[t]{0.35\textwidth}
        		\centering
		\hspace*{-0.8cm}
		\includegraphics[width=1.05\textwidth]{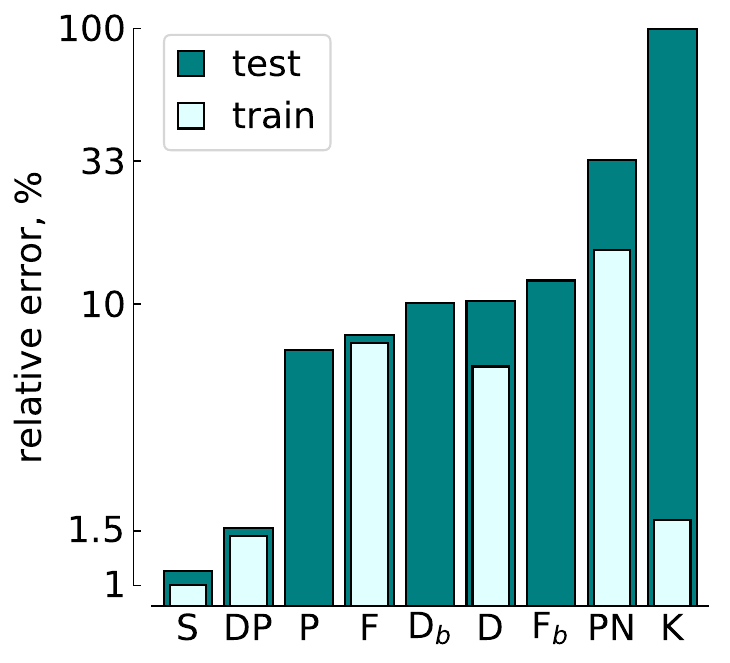} 
		\caption{Elliptic.}
		\label{fig:PDE_results_a}
	\end{subfigure}
	\quad\quad\quad\quad
	\hspace*{-0.7cm}
	\begin{subfigure}[t]{0.35\textwidth}
        		\centering
		\hspace*{-1.8cm}
		\includegraphics[width=1.04\textwidth]{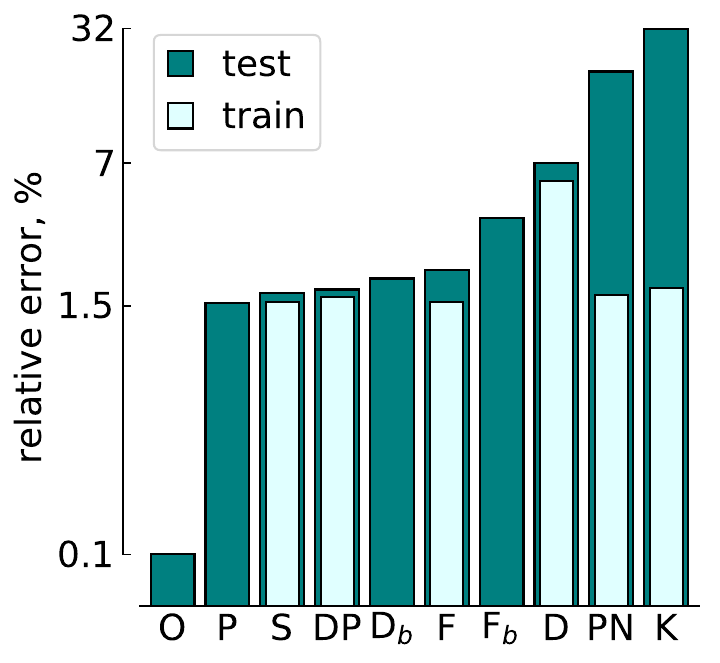}
		\caption{Burgers.}
		\label{fig:PDE_results_b}
	\end{subfigure}
	\begin{subfigure}[t]{0.19\textwidth}
		\centering
		\vspace*{-5.0cm}
		\begin{tabular}{ll}
			\toprule
			oracle & O \\
			subspace & S\\
			DeepPOD & DP\\
			POD & P\\
			FFNO & F\\
			DeepONet & D\\
			FFNO${}_{b}$ & F${}_{b}$\\
			DeepONet${}_{b}$ & D${}_{b}$\\
			PCANet & PN\\
			kernel & K\\
			\bottomrule
		\end{tabular}
	\end{subfigure}
	\caption{Relative errors for selected baselines. Label ``subspace'' refers to subspace regression. For the elliptic problem (a) subspace dimension of ROM methods is bounded by $100$, and for DeepPOD, and subspace regression -- by $40$. Oracle is omitted for the elliptic problem because it has perfect accuracy with $10$ basis functions. For Burgers equation subspace dimensions for all methods $\leq 50$. FFNO${}_{b}$ and DeepONet${}_{b}$ refer to an intrusive ROM with bases extracted from FFNO and DeepONet.}
\end{figure}

\textbf{Subspace regression leads to competitive accuracy.} In Figure~\ref{fig:PDE_results_a} and Figure~\ref{fig:PDE_results_b} we observe that subspace regression performs similar or better than DeepPOD. Among other intrusive methods only classical POD leads to comparable accuracy. Bases extracted from FFNO and DeepONet are worse than global POD in all experiments. Pure regression approaches -- FFNO, DeepONet, PCANet -- appear to be less accurate. PCANet similar to kernel methods shows significant overfitting on our problems, likely resulting from poor inductive bias of the architecture. Note however, that regression approaches are not directly comparable with intrusive techniques, since they do not require a solution of reduced model.

\textbf{Representations learned by neural networks are highly non-optimal.} Representation of all neural networks are inefficient if one compares them with the oracle. For example, by construction of an elliptic dataset, a subspace of dimension $10$ is enough for perfect accuracy. Both DeepPOD and subspace regression reach error about $<1.5\%$ with subspaces of dimension $40$, DeepONet needs to have $>200$ basis functions to reach comparable accuracy, and FFNO with $64$ basis functions in the last hidden layer lead to $10\%$ relative error. Basis functions built from FFNO and DeepONet are similarly inefficient. The same observations hold for Burger's equation.

\subsection{Iterative methods for linear systems}
We illustrate subspace regression for iterative methods using $D=2$ elliptic problems (\ref{eq:stationary_diffusion}). Said iterative methods are deflated CG and two-grid correction for the Jacobi method introduced in Section~\ref{section:subspace regression} and explained in more detail in Appendix~\ref{appendix:iterative_numerics}. Figure~\ref{fig:iterative_results_a} and Figure~\ref{fig:iterative_results_b} shows average convergence curves on test set and Appendix~\ref{appendix:iterative_numerics} contains the rest of relevant data.

\textbf{Iterative methods are less sensitive to subspace quality.} On the training stage, neural networks were presented with data only on first $10$ eigenvectors. Despite that, neural networks trained with subspace embedding nearly match the performance of deflated CG with exact eigenspaces of larger size, for coarse-grid corrected Jacobi method convergence speed with learned subspaces is even slightly better. One possible explanation hinted by Theorem~\ref{th:parametric_eigenproblem} is that from distribution of subspaces some information about nearby vectors can be recovered.

\textbf{Seemingly minor variations in problem setting can lead to substantial variations in the complexity of the learning problem.} Initially for the Jacobi method we posed a subspace regression problem as an approximation of leading eigenspaces of error propagation matrix $I - D^{-1} A$, where $D$ is diagonal of $A$. Neural networks with and without subspace embedding completely failed to learn. After inspection of the dataset we found that the leading eigenspace contains a complicated mixture of functions with low and high frequencies. Since the learning problem appeared to be completely hopeless, we reformulated subspace regression using error propagation matrix of damped Jacobi iteration $I - \omega D^{-1} A$ with $\omega = 0.9$. In contrast to the standard Jacobi method, the damped version leads to subspaces formed by low frequency functions. As evident from Figure~\ref{fig:iterative_results_b} the resulting mapping is easily learnable. A more detailed report can be found in Appendix~\ref{appendix:iterative_numerics}.

\begin{figure}[t]
	\centering
	\begin{subfigure}[t]{0.44\textwidth}
        		\centering
		\hspace*{-0.8cm}
		\includegraphics[width=1.05\textwidth]{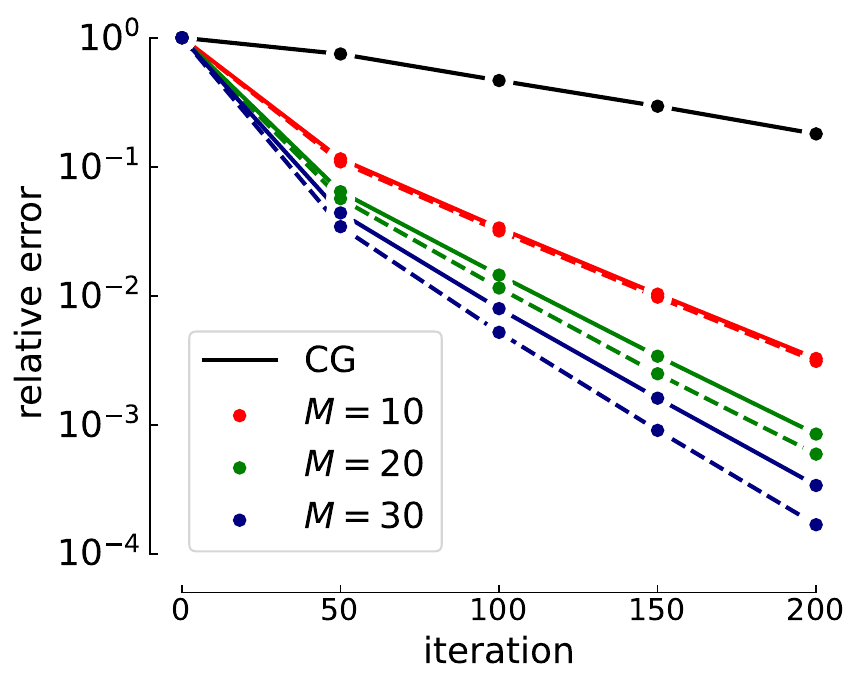} 
		\caption{Deflation}
		\label{fig:iterative_results_a}
	\end{subfigure}
	\quad\quad\quad\quad
	\begin{subfigure}[t]{0.44\textwidth}
        		\centering
		\hspace*{-0.8cm}
		\includegraphics[width=1.05\textwidth]{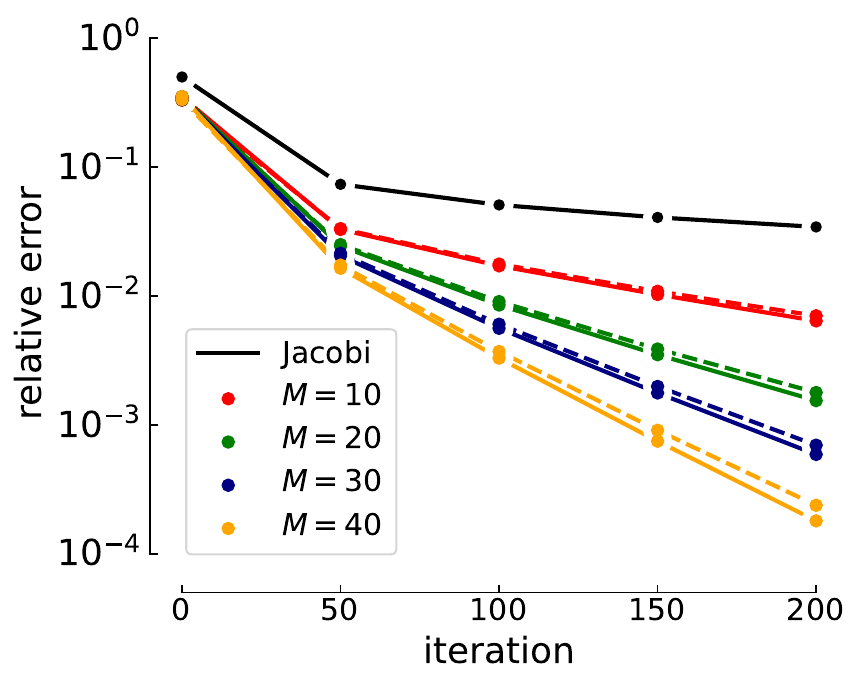}
		\caption{Two-grid.}
		\label{fig:iterative_results_b}
	\end{subfigure}
	\caption{Convergence results for iterative methods. Learned methods are marked with solid lines, and dashed lines correspond to iterative methods with optimal deflation and coarse-grid spaces, $M$ refers to subspace size.}
\end{figure}

\section{Conclusion}
Subspace regression -- a prediction of subspace from available data -- is an interesting problem with a variety of applications including reduced order modelling for partial differential equations, approximation of eigenspaces for eigenproblems, construction of iterative methods for linear problems and optimal control. We formalise subspace regression as a statistical learning problem and introduce several loss functions that are suitable for subspace data. For most of the applications considered we observe that the learning problem is too complicated even when a specialised loss function is used. To simplify learning we propose to approximate a given subspace with a subspace of larger dimension. The resulting technique, called subspace embedding, significantly improves accuracy and generalisation gap. The idea of subspace embedding is that redundancy typically simplifies the learning process and leads to more robust performance. Even though this strategy clearly helps, it introduces a large gap between dimensions of optimal and learned subspaces. The same gap is observed for the classical operator learning problems, when the neural network is trained to approximate solution mapping for parametric PDE. In this case the learned basis can be extracted from the last hidden layer. This neural basis is far from optimal, requiring an excessive number of basis vectors to be used for reaching comparable accuracy. Whether this inefficiency in representation can be resolved, remains an open problem.

\section{Reproducibility statement}
Code used for training, evaluation, and dataset generation is available on \url{https://github.com/VLSF/subreg}. In the current version detailed description of architectures, hyperparameters, dataset generation and training details appear in Appendix \ref{appendix:icontrol_numerics}, Appendix \ref{appendix:iterative_numerics}, Appendix \ref{appendix:PDEs_numerics}, Appendix \ref{appendix:eigenproblems_numerics}.
\bibliography{subreg}
\bibliographystyle{plain}

\appendix
\section{Proof of Theorem 1}
\label{appendix:theorem_1}
\begin{enumerate}
	\item
	To show that $L_1(A, B)$ does not depend on the chosen representative we observe that
	\begin{equation}
		\label{eq:L1_equivalent_form}
		L_1(A, B) = p - \left\|Q_{B}^\top Q_{A}\right\|_{F}^2 = \frac{1}{2}\left\|P_{B} - P_{A}\right\|_{F}^2 - \frac{k-p}{2},
	\end{equation}
	where $P_{A} = A\left(A^\top A\right)^{-1}A^\top, P_{B} = B\left(B^\top B\right)^{-1}B^\top$ are orthogonal projectors on the columns spaces of $A$ and $B$.
	When QR decompositions $A = Q_{A} R_{A}, B = Q_{B} R_{B}$ are available, projectors become $P_{A} = Q_{A}Q_{A}^\top, P_{B} = Q_{B}Q_{B}^\top$ and identity~(\ref{eq:L1_equivalent_form}) can be verified by algebraic manipulations
	\begin{multline}
		\frac{1}{2}\left\|P_{B} - P_{A}\right\|_{F}^2 - \frac{k-p}{2} = \frac{1}{2}\text{tr}\left(\left(Q_{B}Q_{B}^\top - Q_{A}Q_{A}^\top\right)\left(Q_{B}Q_{B}^\top - Q_{A}Q_{A}^\top\right)\right)  - \frac{k-p}{2}\\ = \frac{1}{2}\text{tr}\left(Q_{B}Q_{B}^\top\right) + \frac{1}{2}\text{tr}\left(Q_{A}Q_{A}^\top\right) - \left\|Q_{B}^\top Q_{A}\right\|_{F}^2 - \frac{k-p}{2} = p - \left\|Q_{B}^\top Q_{A}\right\|_{F}^2.
	\end{multline}
	From the equivalent form of loss $L_{1}(A, B)$ given in equation~(\ref{eq:L1_equivalent_form}) one can immediately conclude that $L_1(A, B)$ does not depend on the representatives $A, B$ chosen from $\lceil A\rceil, \lceil B\rceil$. The reason is projectors are invariant under right $\text{GL}$ transformations. For example, $P_{A} = P_{\widetilde{A}}$, where $\widetilde{A} = A G$ and $G$ is arbitrary non-degenerate matrix $G\in\mathbb{R}^{k\times k}$
	\begin{multline}
		\widetilde{A}\left(\widetilde{A}^\top \widetilde{A}\right)^{-1}\widetilde{A}^\top = A G \left(G^\top A^\top A G\right)^{-1} G^\top A^\top \\= A G G^{-1}\left(A^\top A\right)^{-1} \left(G^\top\right)^{-1}G^\top A^\top = A\left(A^\top A\right)^{-1}A^\top.
	\end{multline}
	
	Now, when we know that $L_{1}(A, B)$ does not depend on the chosen representatives, it is easy to show that the minimal value of loss is $0$ and it is reached when $\mathcal{S}(B)\subset \mathcal{S}(A)$. To see this, select representatives such that $Q_{A} = \begin{pmatrix}\widetilde{Q}_{B} & \widetilde{Q}_{B}^\perp\end{pmatrix}$, where $\widetilde{Q}_{B}$ is block matrix formed from subset of columns of $Q_{B}$ and columns of $\widetilde{Q}_{B}^\perp$ are all orthogonal to $Q_{B}$. This selection is always possible since $(I- Q_{B}Q_{B}^\top) + Q_{B}Q_{B}^\top = I$. Representatives selected in this form give
	\begin{equation}
		L_1(A, B) = p - \left\|Q_{B}^\top \widetilde{Q}_{B}\right\|_{F}^2 = p - q \geq 0,
	\end{equation}
	where $\widetilde{Q}_{B}\in\mathbb{R}^{n\times q}, q\leq p$. The last identity follows by construction: $\widetilde{Q}_{B}$ is composed from columns of $Q_{B}$. Loss becomes zero only if $p = q$, or, equivalently, $\mathcal{S}(B)\subset \mathcal{S}(A)$.
	\item We first show that
	\begin{equation}
		\label{eq:L2_equivalent_form}
		L_2(A, B; z) = \min_{u} \left\| Au - Q_{B}z\right\|_{2}^{2} = \left\|\left(I - P_{A}\right) Q_{B}z\right\|_{2}^{2},
	\end{equation}
	where $P_{A} = A\left(A^\top A\right)^{-1}A^\top$ is orthogonal projector on the columns space of $A$. Using $I = \left(I - P_{A}\right) + P_{A}$, and $A \left(I - P_{A}\right) =  \left(I - P_{A}\right) A = 0$ we obtain
	\begin{multline}
		\min_{u} \left\| Au - Q_{B}z\right\|_{2}^{2} = \min_{u} \left\| Au - P_{A}Q_{B}z - \left(I - P_{A}\right) Q_{B}z\right\|_{2}^{2} \\= \min_{u} \left\| Au - P_{A}Q_{B}z\right\|_{2}^2 + \left\|\left(I - P_{A}\right) Q_{B}z\right\|_{2}^{2} = \left\|\left(I - P_{A}\right) Q_{B}z\right\|_{2}^{2}.
	\end{multline}
	The last equality holds since $P_{A}Q_{B}$ and $A$ share the same columns space. Given that $P_{A}$ does not depend on representative $A$ from $\lceil A\rceil$, and that $L_{2}(A, B; z)$ depends on $A$ only via $P_{A}$, we conclude that the same is true for $L_{2}(A, B; z)$.
	\item From equation~(\ref{eq:L2_equivalent_form}) we find
	\begin{multline}
		\mathbb{E}_{z}\left[L_2(A, B; z)\right] = \mathbb{E}_{z}\left[\left\|\left(I - P_{A}\right) Q_{B}z\right\|_{2}^{2}\right]  = \mathbb{E}_{z}\left[ z^\top\left(Q_B^\top \left(I - P_{A}\right) Q_{B}\right)z\right] \\ = \mathbb{E}_{z}\left[ \text{tr} \left(\left(Q_{B}^\top Q_B - Q_{B}^\top Q_{A}Q_{A}^\top Q_{B}^\top\right)zz^\top\right) \right] = \text{tr} \left(\left(Q_{B}^\top Q_B - Q_{B}^\top Q_{A}Q_{A}^\top Q_{B}^\top\right)\mathbb{E}_{z}\left[zz^\top\right]\right) \\ = p - \left\|Q_{B}^\top Q_{A}\right\|_{F}^2 = L_{1}(A, B).
	\end{multline}
\end{enumerate}

\section{Proof of Theorem 2}
\label{appendix:theorem_2}
We provide two comments before proceeding with the proof.

In most parts of the text we assumed working with the non-compact Stiefel manifold and in this theorem we have data on the compact Stiefel manifold (see \cite{amsallem2010interpolation} for definitions). We specify how one can compute $Q_{A}$ and $\dot{Q}_{A}$ having $A$ and $\dot{A}$. One may start from any stable version of Cholesky QR, e.g., \cite{garrison2024randomized}, \cite{yamamoto2015roundoff}, and obtain
\begin{equation}
	Q_{A} = A R^{-1},
\end{equation}
where $R$ is Cholesky factorization of Gram matrix $A^\top A$, i.e., $A^\top A = R^\top R$ where $R$ is a lower triangular square invertible matrix. To find the derivative of $Q_{A}$ we need to know the derivative $\frac{d}{dt} R^{-1}$. Derivative $\dot{R}$ can be computed as a solution to Lyapunov equation
\begin{equation}
	\dot{R}^\top R + R^\top \dot{R} = \dot{A}^\top A + A^\top \dot{A},
\end{equation}
after that $\frac{d}{dt} R^{-1}$ can be found from Jacobi identity $\frac{d}{dt} R^{-1} = - R^{-1} \dot{R} R^{-1}.$

In Theorem~\ref{th:embedding_technique} we use $\frac{1}{2}\left\|W(t) W(t)^{\top} - V(t) V(t)^{\top}\right\|_{F}^2 - \frac{r-k}{2}$ to measure the quality of approximation. It follows from the proof in Appendix~\ref{appendix:theorem_1} that
\begin{equation}
	\frac{1}{2}\left\|W(t) W(t)^{\top} - V(t) V(t)^{\top}\right\|_{F}^2 - \frac{r-k}{2} = L_1(W(t), V(t)),
\end{equation}
where $L_1(W(t), V(t))$ is a loss function defined in Theorem~\ref{th:loss_functions}. We can alternatively rewrite
\begin{equation}
	\frac{1}{2}\left\|W(t) W(t)^{\top} - V(t) V(t)^{\top}\right\|_{F}^2 - \frac{r-k}{2} = \sum_{i=1}^{k}\sin^2(\theta_i)
\end{equation}
using the definition of principle angles $\theta_i$ between column spaces of matrices $W(t)$ and $V(t)$ \cite{bjorck1973numerical}. Given the later form, it is clear that small values of $\frac{1}{2}\left\|W(t) W(t)^{\top} - V(t) V(t)^{\top}\right\|_{F}^2 - \frac{r-k}{2}$ correspond to better aligned subspaces.

To demonstrate the main result of Theorem~\ref{th:embedding_technique} we first prove a supplementary statement.
\begin{lemma}
	\label{le:geodesic_embedding}
	Let $A(t)$ be geodesic on $\text{Gr}(k_1, n)$, $A(t)^\top A(t) = I_{k_1}$. One can always construct geodesic $B(t)$, $B(t)^\top B(t) = I_{k_2}$ on $\text{Gr}(k_2, n), k_2>k_1$ such that $\lceil A(t)\rceil\subset \lceil B(t)\rceil$ and $\left\|\dot{B}(t)\right\|_{F}^2 \leq \left\|\dot{A}(t)\right\|_{F}^2$, where inequality is strict unless $ \left\|\dot{A}(t)\right\|_{F}^2 \neq 0$.
\end{lemma}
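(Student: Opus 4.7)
The plan is to exploit the explicit SVD-based parametrisation of Grassmann geodesics and build $B(t)$ by freezing the fastest rotational components of $A(t)$. First I would put $A(t)$ in canonical form: let $H = \dot A(0)$, which by horizontality satisfies $A(0)^\top H = 0$, and take a thin SVD $H = U_H \Sigma V_H^\top$ with $\Sigma = \mathrm{diag}(\sigma_1,\ldots,\sigma_{k_1})$ and $\sigma_1 \geq \cdots \geq \sigma_{k_1} \geq 0$. Writing $y_i$ for the $i$-th column of $A(0)V_H$ and $u_i$ for the $i$-th column of $U_H$, the standard Grassmann geodesic formula yields the equivalent representative $A(t)V_H = [c_1(t),\ldots,c_{k_1}(t)]$ with $c_i(t) = \cos(\sigma_i t)\,y_i + \sin(\sigma_i t)\,u_i$. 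The family $\{y_i,u_i\}_{i=1}^{k_1}$ is orthonormal by construction, and constant speed of the geodesic gives $\|\dot A(t)\|_F^2 = \sum_{i=1}^{k_1}\sigma_i^2$.

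Next, with $p = k_2 - k_1 \geq 1$ and $q = \min(p,k_1)$, I would define the candidate
\[
B(t) = \bigl[\, y_1, u_1,\ldots,y_q, u_q,\; c_{q+1}(t),\ldots,c_{k_1}(t),\; q_1,\ldots,q_{p-q}\,\bigr],
\]
where $q_1,\ldots,q_{p-q}$ are arbitrary fixed orthonormal vectors in the orthogonal complement of $\mathrm{span}\{y_i,u_i\}_{i\leq k_1}$ (used only in the regime $p>k_1$; such vectors exist because $n \geq k_2$). Orthonormality $B(t)^\top B(t) = I_{k_2}$ is immediate from $\cos^2+\sin^2 = 1$ and orthonormality of $\{y_i,u_i,q_j\}$; the inclusion $\mathcal{S}(A(t)) \subset \mathcal{S}(B(t))$ holds columnwise, since $c_i(t) \in \mathrm{span}(y_i,u_i)$ for $i \leq q$ and $c_i(t)$ is literally a column of $B(t)$ for $i > q$.

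To confirm that $B(t)$ is itself a geodesic on $\text{Gr}(k_2,n)$ and not merely a Stiefel curve, I would verify horizontality $B(t)^\top \dot B(t) = 0$ and the Grassmann geodesic equation $\ddot B(t) = -B(t)\,\bigl(\dot B(t)^\top \dot B(t)\bigr)$. Both reduce to the per-block identities $c_i^\top \dot c_i = 0$ and $\ddot c_i = -\sigma_i^2 c_i$ together with orthogonality of the $\{y_i, u_i\}$ family; equivalently, $B(t)$ is the geodesic whose tangent SVD carries singular values $0,\ldots,0,\sigma_{q+1},\ldots,\sigma_{k_1}$, the leading $q+p$ zeros coming from the frozen columns and the $q_j$. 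The decisive computation is then
\[
\|\dot B(t)\|_F^2 \;=\; \sum_{i=q+1}^{k_1}\|\dot c_i(t)\|_2^2 \;=\; \sum_{i=q+1}^{k_1}\sigma_i^2 \;=\; \|\dot A(t)\|_F^2 \;-\; \sum_{i=1}^{q}\sigma_i^2,
\]
which is at most $\|\dot A(t)\|_F^2$ and strictly less whenever $\sigma_1 > 0$, equivalently whenever $\|\dot A(t)\|_F^2 \neq 0$.

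The main obstacle I expect is bookkeeping rather than a sharp mathematical step: I must ensure that the right-multiplication by $V_H$ used to reach the canonical form is absorbed into the equivalence class $\lceil\cdot\rceil$, so that neither $\|\dot A\|_F^2$ nor the subspace inclusion are affected by the change of representative, and I must distinguish Stiefel derivatives from Grassmann derivatives carefully enough that the frozen $y_i, u_i, q_j$ columns genuinely contribute zero to the canonical metric. Once the SVD-aligned representative is in place, the construction decouples into independent 2-plane rotations of which we simply freeze the first $q$, and the rest is one-line algebra per block.
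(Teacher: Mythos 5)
Your proof is correct and takes essentially the same route as the paper's: both reduce the geodesic to independent two-plane rotations $c_i(t)=\cos(\sigma_i t)y_i+\sin(\sigma_i t)u_i$ via the SVD of $\dot A(0)$ and then freeze a rotation plane with nonzero singular value by including both $y_i$ and $u_i$ as static columns, which preserves the geodesic property and the inclusion $\mathcal{S}(A(t))\subset\mathcal{S}(B(t))$ while removing $\sigma_i^2$ from the squared speed. The only difference is that you handle general $k_2>k_1$ by freezing $q=\min(k_2-k_1,k_1)$ planes and padding with fixed orthonormal vectors, whereas the paper writes out the single-plane case $k_2=k_1+1$; this is a harmless generalisation (just ensure the frozen set contains an index with $\sigma_i\neq 0$, which the descending ordering guarantees whenever $\dot A\neq 0$).
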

\begin{proof}
	Since $A(t)$ is geodesic we can write $A(t) = A(0) Y \cos(\Sigma t) Y^\top + U\sin(\Sigma t) Y^\top$, where $U \Sigma Y^\top$ is singular value decomposition of $\dot{A}(0)$. Using orthogonality of $\dot{A}(0)$ and $A(0)$ we find $\left\| \dot{A}(t)\right\|_{F}^2 = \text{tr}(\Sigma)$. Without loss of generality we assume that $\Sigma_{11} \neq 0$. Consider
\begin{multline}
	B(t) =
	\begin{pmatrix}
		& | \\
		A(0) & u_1\\
		&|
	\end{pmatrix}
	\begin{pmatrix}
		y_1y_1^\top + \sum_{i=2}^{k_1} \cos(\sigma_i t)y_i y_i^\top & 0\\
		0 & 1
	\end{pmatrix} + \sum_{i=2}^{k_1} \sin(\sigma_i t)u_i \begin{pmatrix}y_i\\0\end{pmatrix}^\top \\
	= \begin{pmatrix}
		& | \\
		A(0) Y \cos\left(\widetilde{\Sigma}t\right) Y^\top + U \sin\left(\widetilde{\Sigma}t\right) Y^\top & u_1\\
		&|
	\end{pmatrix},
\end{multline}
where $y_i$ are columns of $Y$, $u_i$ are columns of $U$, $\sigma_i$ are diagonal elements of $\Sigma$, $\widetilde{\Sigma} = \Sigma - \sigma_1e_1 e_1^\top$, that is, $\widetilde{\Sigma}$ can be obtained from $\Sigma$ by replacing $\Sigma_{11} = \sigma_1$ by $0$.
Clearly $B(t)$ is geodesic and $\left\| \dot{B}(t)\right\|_{F}^2 = \left\| \dot{A}(t)\right\|_{F}^2 - \sigma_1^2 < \left\| \dot{A}(t)\right\|_{F}^2$. Next we show that principal angles between $A(t)$ and $B(t)$ are all zero. To see this we observe that
\begin{multline}
	A^\top(t) B(t) = 	
	\begin{pmatrix}
		& | \\
		Y\left( \cos\left(\Sigma t\right) \cos\left(\widetilde{\Sigma} t\right) + \sin\left(\Sigma t\right) \sin\left(\widetilde{\Sigma} t\right))\right)Y^\top & y_1 \sin(\sigma_1 t)\\
		&|
	\end{pmatrix} \\= 
	\begin{pmatrix}
		& | \\
		Y\left(I - (1 - \cos(\sigma_1 t))e_1 e_1^\top\right)Y^\top & y_1 \sin(\sigma_1 t)\\
		&|
	\end{pmatrix}.
\end{multline}
From the identity above Frobenius norm reads
\begin{equation}
	\left\|A^\top(t) B(t)\right\|_{F}^2 = \sum_{i=1}^{k_1}\cos^2(\theta_i) = \text{tr}\left(\cos^2(\sigma_1 t) y_1 y_1^\top + \sum_{i=2}^{k_1} y_{i} y_{i}^{\top} + y_1 y_1^\top \sin^2(\sigma_1 t)\right) = k_1,
\end{equation}
and we conclude that $\theta_i = 0$ for all $i=1,\dots, k_1$. 
\end{proof}
Lemma~\ref{le:geodesic_embedding} also implies that for two such geodesics $\frac{1}{2}\left\|A(t) A(t)^{\top} - B(t) B(t)^{\top}\right\|_{F}^2 - \frac{k_2-k_1}{2} = 0$.

Now we are ready to show the main result of Theorem~\ref{th:embedding_technique}. We split interval of interest $t\in[0, T]$ on subintervals $[t_i, t_{i+1}]$ of length $\Delta t$. On each subinterval we consider three curves: (i) original continuously differentiable curve $V(t)\in \text{Gr}(k, n)$, (ii) approximation of $V(t)$ by geodesic $Z(t)\in \text{Gr}(k, n)$ passing through $V(t_i)$ with derivative $\dot{V}(t_i)$, (iii) embedding of $Z(t)$ by geodesic $W(t)$ on $\text{Gr}(r, n), r>k$ selected as explained in Lemma~\ref{le:geodesic_embedding}. We start by showing that principle angles between $W(t)$ and $V(t)$ can be made arbitrary small
\begin{equation}
	\begin{split}
		&\frac{1}{2}\left\|V(t) V(t)^{\top} - W(t) W(t)^{\top}\right\|_{F}^2 - \frac{r - k}{2}\\
		&= \frac{1}{2}\left\|V(t) V(t)^{\top} - Z(t) Z(t)^\top + Z(t) Z(t)^\top - W(t) W(t)^{\top}\right\|_{F}^2 - \frac{r - k}{2} \\
		&\leq \frac{1}{2}\left\|V(t) V(t)^{\top} - Z(t) Z(t)^\top \right\|_{F}^2 + \frac{1}{2} \left\|Z(t) Z(t)^\top - W(t) W(t)^{\top}\right\|_{F}^2 - \frac{r - k}{2} \\
		&= \frac{1}{2}\left\|V(t) V(t)^{\top} - Z(t) Z(t)^\top \right\|_{F}^2.
	\end{split}
\end{equation}
Now we know that on each interval the distance between $V(t)$ and $W(t)$ is bounded by the distance from $V(t)$ to the geodesics that passes through $V(t_i)$ with speed $\dot{V}(t_i)$. Since interval is assumed to be small, we expand geodesic $Z(t)$ in Taylor series keeping terms proportional to $\left(\Delta t\right)^{0}$ and $\Delta t$ and for $V(t)$ we use Lagrange reminder $V(t) = V(t_i) + \dot{V}(\widetilde{t}) (t - t_i), t\in[t_i,t_{i+1}],\widetilde{t}\in[t_i, t]$:
\begin{equation}
	\frac{1}{2}\left\|V(t) V(t)^{\top} - Z(t) Z(t)^\top \right\|_{F}^2 \simeq 2 (t -t_i)^2 \left\|\dot{V}(\widetilde{t}) - \dot{V}(t_{i})\right\|_{F}^2.
\end{equation}
By assumption $V(t)$ is continuously differentiable, meaning the expression above can be made arbitrary small by selecting sufficiently small intervals $[t_i, t_{i+1}]$.

To show that derivative of $W(t)$ can be made smaller than $V(t)$ observe that $\left\|\dot{W}(t)\right\|_{F}^2 < \left\|\dot{Z}(t)\right\|_{F}^2$ on each subinterval where $\left\|\dot{Z}(t_i)\right\|_{F}^2 \neq 0$. Since $\left\|\dot{Z}(t)\right\|_{F}^2 = \left\|\dot{Z}(t_i)\right\|_{F}^2 = \left\|\dot{V}(t_i)\right\|_{F}^2$ and $\dot{V}(t)$ is continuous function, we, again, can select sufficiently small intervals such that deviation of $\left\|\dot{V}(t)\right\|_{F}^2$ from $\left\|\dot{V}(t_i)\right\|_{F}^2$ on each interval is small enough for $\left\|\dot{W}(t)\right\|_{F}^2 < \left\|\dot{V}(t)\right\|_{F}^2$ to hold.

\section{Subspace embedding example}
\label{appendix:embedding_example}
\begin{figure}[t]
	\centering
	\begin{subfigure}[t]{0.3\textwidth}
        		\centering
		\vspace*{-4.5cm}
		\includegraphics[width=0.8\textwidth,trim={2cm 2cm 2cm 2cm},clip]{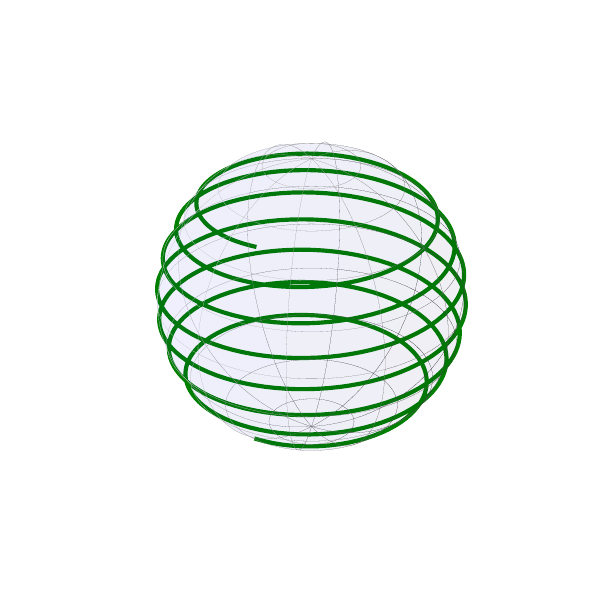}
		\vspace*{0.54cm}
		\caption{$\gamma_1(t) \in \text{Gr}(3, 1)$}
		\label{fig:embedding_theorem_a}
	\end{subfigure}
	~
	\begin{subfigure}[t]{0.3\textwidth}
        		\centering
		\vspace*{-4.5cm}
		\includegraphics[width=0.8\textwidth,trim={2cm 2cm 2cm 2cm},clip]{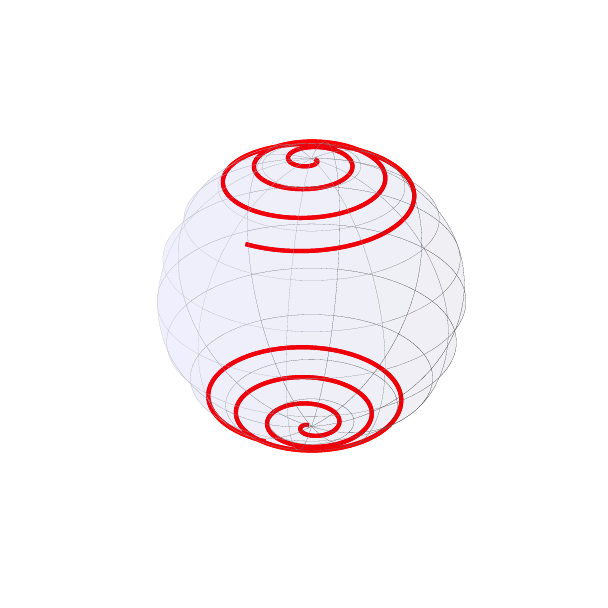}
		\vspace*{0.54cm}
		\caption{$\perp \gamma_2(t) \in \text{Gr}(3, 2)$}
		\label{fig:embedding_theorem_b}
	\end{subfigure}
	~
	\begin{subfigure}[t]{0.3\textwidth}
        		\centering
		\hspace*{0.1cm}
		\includegraphics[width=1.1\textwidth,trim={0.25cm 0.25cm 0.25cm 0.5cm},clip]{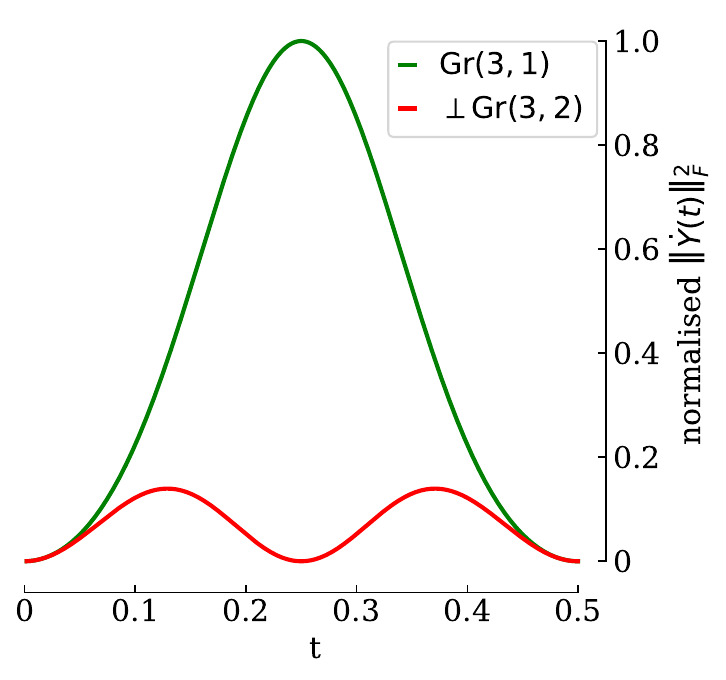}
		\caption{Norms of $\dot{\gamma}_1(t)$  and $\dot{\gamma}_2(t)$}
		\label{fig:embedding_theorem_c}
	\end{subfigure}
	\caption{Example of subspace embedding detailed in Appendix~\ref{appendix:embedding_example}.}
\end{figure}
The proof of Theorem~\ref{th:embedding_technique} is constructive, meaning we can compute $W(t)$ given $V(t)$ and $\dot{V}(t)$ or its estimation. We select
\begin{equation}
 	V(t) = \gamma_1(t) = 
	\begin{pmatrix}
		\sin(\theta(t)) \sin(\phi(t))\\
		\cos(\theta(t)) \sin(\phi(t))\\
		\cos(\phi(t))
	\end{pmatrix}, \theta(t) = 7 \pi \cos(2\pi t), \phi(t) = \pi/2  + \pi/4 \cos(2 \pi t).
\end{equation}
Curve $\gamma_1(t) \in \text{Gr}(1, 3)$ is illustrated in Figure~\ref{fig:embedding_theorem_a}. We next estimate derivatives by splitting $t$ on a set of subintervals and taking logarithm on each interval. This derivative is used as explained in Lemma~\ref{le:geodesic_embedding} to define $W(t)\in\text{Gr}(2, 3)$. Since $\frac{1}{2}\left\|\dot{P}_{W}\right\|_{F}^2 = \left\|\dot{W}(t)\right\|_{F}^2 = \frac{1}{2}\left\|\frac{d}{dt}\left(I - P_{W}\right)\right\|_{F}^2$ we plot $\gamma_2(t) \perp W(t)$ in Figure~\ref{fig:embedding_theorem_b}. Curve $\gamma_2(t)$ appears to be discontinuous, but actually it is continuous owning to $\mathbb{Z}_2$ symmetry of compact Stiefel manifold $\text{St}(1, 3)$. Norms of derivative are compared in Figure~\ref{fig:embedding_theorem_c}: curve $\gamma_2$ is manifestly smoother than $\gamma_1$.

\section{Proof of Theorem 3}
\label{appendix:theorem_3}
In the proof we will write $F$ ad $G$ in place of $F_{k}$ and $G_{k}$ assuming that $k$ is fixed and the value of $k$ is evident from the context.
\subsection{Parts 1. and 2.}
We order eigenvectors in the increase of eigenvalue $E(i_1,\dots, i_D) := \lambda_{i_1,\dots,i_D} = \sum_{j=1}^{D}a_{j}i_{j}^2$, which we will also call energy in this section. To understand how eigenvectors and subspaces are selected for different coefficients $a_1,\dots, a_D$ we introduce continuous relaxation of energy $E(z_1,\dots, z_D) = \sum_{j=1}^{D}a_{j}z_{j}^2$, where $z_j \in\mathbb{R}_{+}$. In continuous form, surfaces with constant energies are (hyper)ellipsoids of dimension $D-1$, so the process of selecting $k$-th eigenvector or constructing subspace of dimension $k$ can be understood through the following informal algorithm:
\begin{enumerate}
	\item Select $a_1,\dots, a_{D}$ and $c = 0$.
	\item Gradually increase $c$ and track ellipsoid $\sum_{j=1}^{D}a_{j}z_{j}^2 = c$.
	\item While increasing $c$ add each standard positive lattice point (point with positive integer coordinates) that fall inside the ellipsoids.
	\item The order at which lattice points cross an inflating ellipsoid define which eigenvector appears on position $k$ and which vectors form eigenspace of dimension $k$.
\end{enumerate}
To illustrate this process, consider $E(z_1,\dots, z_D) = a_1 z_1^2 + a_2 z_2^2$, where $a_2 \gg a_1$. If we follow procedure outlined above we will see that first lattice points encountered are $(1, 1), (2, 1), (3, 1), (4, 1), \dots$. So for considered $a_1, a_2$ the subspace of first $3$ eigenvectors is a span of $\phi_{1, 1}, \phi_{2, 1}, \phi_{3, 1}$, and the eigenvector that appears on position $3$ is $\phi_3$. To describe the map from $a_1,\dots, a_D$ to $\phi_k$ or $V_k$, this procedure needs to be repeated for all possible positive values of real coefficients $a_1,\dots, a_D$.

From the algorithm above one can deduce that for given $a_1,\dots, a_D$ the first time eigenvector with indices $i_1, \dots, i_D$ appears in the sequence of eigenvectors is the first time ellipsoid crosses $i_1, \dots, i_D$. The position $k$ of this eigenvector will be proportional to the normalised volume of the ellipsoid $V_{e}(a_1,\dots, a_D) \big/ V_{s}$, where $V_{e}(a_1,\dots, a_D)$ is a volume of $D$ dimensional ellipsoid with semi-axes $a_1,\dots, a_D$ and $V_{s}$ is a volume of $D$ dimensional sphere with radius $1$.

The first immediate consequence is that $a_1,\dots, a_D$ is a piecewise constant function. Indeed, it is clear $a_1,\dots,a_D$ can always be perturbed with no change in filling order and the change of $ -1 < V_{e}(a_1,\dots, a_D) \big/ V_{s} < 1$, so the eigenvector on position $k$ does not change. That proves the first part of the first statement. Next, we need to show that the set of all possible eigenvectors on position $k$ is finite.

\begin{figure}[t]
	\centering
	\begin{subfigure}[t]{0.3\textwidth}
        		\centering
		\includegraphics[width=1.0\textwidth]{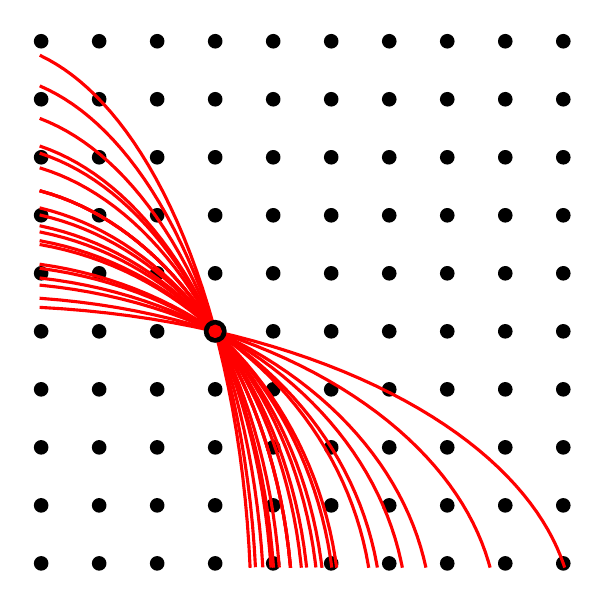}
		\caption{}
		\label{fig:ellipsoids_a}
	\end{subfigure}
	~
	\begin{subfigure}[t]{0.3\textwidth}
        		\centering
		\includegraphics[width=1.0\textwidth]{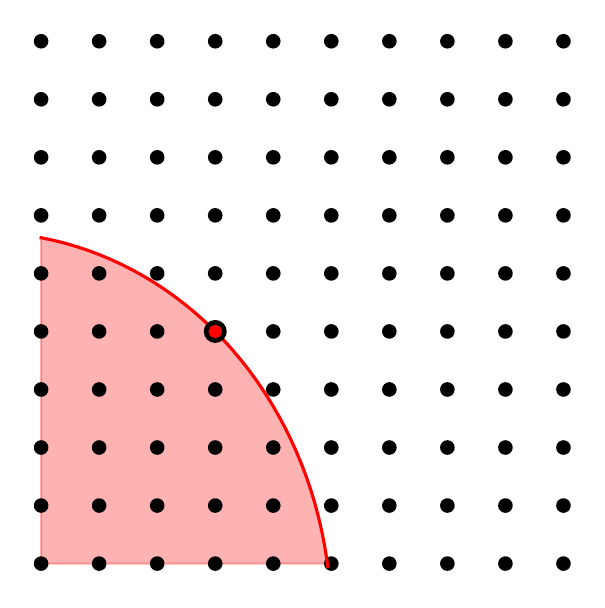}
		\caption{}
		\label{fig:ellipsoids_b}
	\end{subfigure}
	~
	\begin{subfigure}[t]{0.3\textwidth}
        		\centering
		\includegraphics[width=1.0\textwidth]{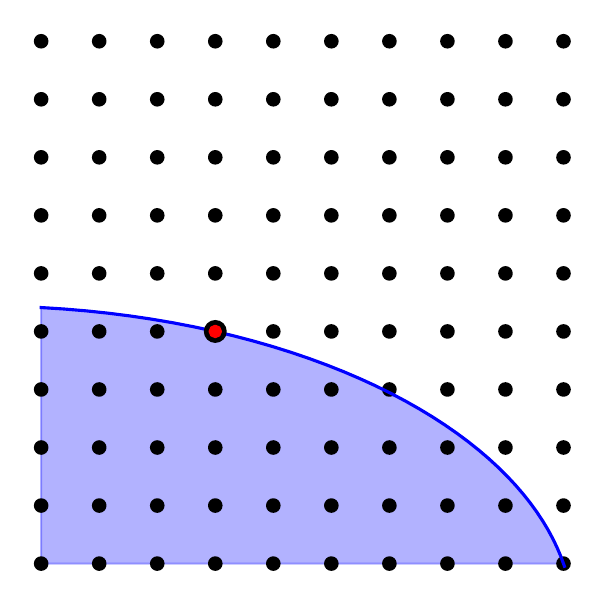}
		\caption{}
		\label{fig:ellipsoids_c}
	\end{subfigure}
	\caption{(a) Parametric family of ellipsoids passing through point $(4, 5)$. (b) Ellipsoid of minimal volume passing through $(4, 5)$. Note that the number of standard lattice points inside is approximately $4\times 5 = 20$, the error of approximation ($5$ in this case) is asymptotically small for ellipsoids of large volume. (c) Example of non-minimal ellipsoid passing through $(4, 5)$. In the non-minimal case, the number of standard lattice points inside an ellipsoid passing through a given point can be made arbitrarily large.}
\end{figure}

To see that, we answer the following question: what is the minimal number of lattice points one ought to cover with an ellipsoid to reach a given lattice point $i_1,\dots, i_D$? For example, point $(1, 1)$ is always reached first. On the other hand, point $(2, 1)$ can be reached arbitrarily late, because one may consider ellipsoids with arbitrary large semi-axis along the second dimension. The position of the point $i_1,\dots, i_D$ is known to be the ratio of volumes, so we need to find an ellipsoid with minimal volume that passes through $i_1,\dots, i_D$. A parametric family of ellipsoids in question and its volume are
\begin{equation}
	\sum_{i =1}^{D} z_i^2 \frac{a^2_{i}}{\left(\sum_{k=1}^{D} a^2_{k} i_{k}^2\right)^{\frac{1}{2}}} = 1,\,V_{e} = \frac{\pi^{\frac{D}{2}}}{\Gamma\left(\frac{D}{2} + 1\right)} \frac{\left(\sum_{k=1}^{D} a^2_{k} i_{k}^2\right)^{\frac{D}{2}}}{\left(\prod_{j=1}^{D}a^2_{j}\right)^{\frac{1}{2}}}.
\end{equation}
See Figure~\ref{fig:ellipsoids_a} for example of a parametric family in $D=2$ passing through lattice point $(4, 5)$. In the expression above we used $a_i^2$ to remove positivity constraints. To find minimal volume we take derivative with respect to $a_k$
\begin{equation}
	\frac{\partial V_{e}}{\partial a_{k}} = 0 \Rightarrow D a_{k}^2 i_{k}^2 - \sum_{j=1}^{D} a_{j}^2 i_{j}^2 = 0.
\end{equation}
To find $a_{i}^2$ we need to compute the nullspace of the linear operator above. It is easy to see that the solution is $a_{k}^2 = \frac{\alpha}{i_{k}^2}$ for arbitrary $\alpha \in \mathbb{R}$. The example of minimal ellipsoid appears in Figure~\ref{fig:ellipsoids_b}. The volume does nod depend on multiplicative constant so we take $\alpha = 1$ and obtain normalised minimal volume
\begin{equation}
	\frac{\min V_{e}\left(i_1,\dots, i_D\right)}{V_{s}} = \prod_{j=1}^{D} i_{j}.
\end{equation}

From the considerations above we can conclude that: (i) eigenvector $i_1,\dots, i_{D}$ can not appear on position $k$ unless $\prod_{j=1}^{D} i_{j} < k$, (ii)  if eigenvector $i_1,\dots, i_{D}$, excluding $1, \dots, 1$, appears on position $k$, it can also appear on any position $l>k$. Statement (ii) is correct because parametric family of ellipsoids passing through $i_1,\dots, i_{D}$ contain ellipsoids of arbitrary large volumes unless $i_j = 1$ for all $j=1,\dots,D$. The example of a non-minimal ellipsoid is in Figure~\ref{fig:ellipsoids_c}.

Statement (ii) directly leads to point 2. of Theorem~\ref{th:parametric_eigenproblem}. Indeed, since any eigenvector appeared on position $k$ can reappear on arbitrary position $l > k$, the number of unique vectors that can form low-energy subspace of dimension $l$ is the number of eigenvectors on position $l$ plus eigenvector $1,\dots, 1$. That finished the proof of point 1. and 2. of Theorem~\ref{th:parametric_eigenproblem}.

Note, that the validity of most of the statements in this section is based on assumptions that we can use continuous relaxation on the problem. In particular, we assumed that the position of the eigenvector is proportional to the volume of the ellipsoid. Of course in completely discrete formulation this is not strictly the case, but since all statements on the number of eigenvectors and eigenspaces are asymptotic, they will remain valid.

\subsection{Part 3.}
From the previous section we know the minimal position eigenvector $i_1,\dots,i_D$ can appear at. Besides that we know that once $i_1,\dots, i_D$ is unlocked, it can appear on all positions $l>k$. Give that, the number of eigenvectors on position $k$ reads
\begin{equation}
	\#_{F}(k, D) = \sum_{i_1=1}^{\infty}\cdots\sum_{i_D=1}^{\infty} \text{Ind}\left[ \prod_{j=1}^{D} i_{j} \leq k\right],
\end{equation}
where $\text{Ind}[\cdot]$ is an indicator function. We are interested in asymptotic expansion for large $k$ and fixed $D$, so the sums above can be approximated by the Euler–Maclaurin formula.

To find asymptotic expansion we will derive recurrence relations for $\#_{F}(k, D)$. We start by introducing a slightly modified function
\begin{equation}
	\widetilde{\#}_{F}(\alpha, D) = \frac{1}{\alpha} \sum_{i_1=1}^{\infty}\cdots\sum_{i_D=1}^{\infty} \text{Ind}\left[ \prod_{j=1}^{D} i_{j} \leq  \alpha\right].
\end{equation}
Clearly $\#_{F}(k, D) = k\widetilde{\#}_{F}(k, D)$ so if we know how to compute $\widetilde{\#}_{F}(\alpha, D)$, we can recover $\#_{F}(k, D)$. For $D = 2$ with the help of Euler–Maclaurin formula we obtain
\begin{equation}
	\widetilde{\#}_{F}(\alpha, 2) = \frac{1}{\alpha}\sum_{i_1=1}^{\infty} \text{Ind}\left[i_1 \leq \alpha\right] \sum_{i_2=1}^{\frac{\alpha}{i_1}} 1 = \frac{1}{\alpha}\sum_{i_1=1}^{\alpha} \frac{\alpha}{i_1} \sim \frac{1}{\alpha}\left(\int_{1}^{\alpha}dx \frac{\alpha}{x} + \frac{\alpha + 1}{2}\right) = \log \alpha + \frac{1}{2} + \frac{1}{2\alpha}.
\end{equation}

Next we find recurrence relation
\begin{multline}
	\widetilde{\#}_{F}(\alpha, D+1) =  \frac{1}{\alpha}\sum_{i_{D+1}=1}^{\alpha} \frac{1}{i_{D+1}} \sum_{i_{D} = 1}^{\frac{\alpha}{i_{D+1}}} \frac{1}{i_{D}} \cdots \sum_{i_{D} = 1}^{\frac{\alpha}{i_{D+1} i_{D}\cdots i_{2}}} \frac{1}{i_{2}}\\ = \sum_{i_{D+1}=1}^{\alpha} \frac{\widetilde{\#}_{F}\left(\frac{\alpha}{i_{D+1}}, D\right)}{i_{D+1}} \sim \int_{1}^{\alpha} dx \frac{\widetilde{\#}\left(\frac{\alpha}{x}, D\right)}{x} + \frac{1}{2}\left(\frac{\widetilde{\#}\left(1, D\right)}{\alpha} + \widetilde{\#}\left(\alpha, D\right)\right).
\end{multline}

It is not hard to show that, starting from $D=2$, recurrence relation can only produce three type of terms: $\log^{p}(\alpha)$, constant term $c$, $\frac{1}{\alpha}$. This can be seen as follows
\begin{equation}
	\begin{split}
		&\log^{p}(\alpha) \rightarrow \int_{1}^{\alpha} \frac{\log^{p}(\alpha/x)}{x} + \frac{1}{2}\log^{p}(\alpha) = \frac{1}{p+1}\log^{p+1}(\alpha) + \frac{1}{2}\log^{p}(\alpha),\\
		&c \rightarrow \int_{1}^{\alpha} dx \frac{c}{x} + \frac{c}{2}\left(\frac{1}{\alpha} + 1\right) = c\log(k) + \frac{c}{2\alpha} + \frac{c}{2},\\
		&\frac{1}{\alpha}\rightarrow \int_{1}^{\alpha} \frac{dx}{\alpha} + \frac{1}{\alpha} = 1.
	\end{split}
\end{equation}
Given that, starting from $\widetilde{\#}_{F}(\alpha, 2)$ and applying recurrence relations $D-2$ times we obtain leading term
\begin{equation}
	\widetilde{\#}_{F}(\alpha, D) \sim \frac{1}{(D-1)!}\log(\alpha)^{D-1} \Rightarrow \#_{F}(k, D) \sim \frac{k}{(D-1)!}\log(k)^{D-1},
\end{equation}
where last identity follows from the definition of $\widetilde{\#}_{F}(\alpha, D).$

\subsection{Part 4.}
We cannot apply the Euler–Maclaurin formula when $k$ is fixed and $D$ is large. To count states under specified conditions we will use factorisation on prime numbers. For positive integer $p$ we can write
\begin{equation}
	p = q_{1}(p)^{a_1(p)}\cdots q_{m_p}(p)^{a_{m_{p}}(p)},
\end{equation}
where $q_i(p)$ are prime factors and $a_i(p)$ are their multiplicities. Given this factorisation we can find the number of ways positive integer $p$ can be represented as products of $D$ positive integers. All products of $D$ integers correspond to some rearrangement of products in the factorisation of prime factors. The number of such rearrangements is
\begin{equation}
	\tau(p, D) = \prod_{r=1}^{m_{p}}\frac{(a_r(p) + D - 1)!}{(D-1)!\,a_{r}(p)!}.
\end{equation}
This expression is easy to understand if one considers forming the product of $D$ numbers by distributing $q_r(p)$ to selected $a_r(p)$ among $D$ factors for each prime factor $q_r(p), r=1,\dots,m_{p}$.

From the expression above, the number of states on position $k$ reads
\begin{equation}
	\#(k, D) = \sum_{p=1}^{D} \tau(p, D).
\end{equation}

If $D$ is large
\begin{equation}
	\tau(p, D) = \prod_{r=1}^{m_{p}}\frac{(a_r(p) + D - 1)!}{(D-1)!\,a_{r}(p)!} \sim \prod_{r=1}^{m_{p}} \frac{D^{a_r(p)}}{a_r(p)!} = \frac{D^{\sum_{r=1}^{m_{p}}a_{r}(p)}}{\prod_{r=1}^{m_{p}}a_r(p)!} = \frac{D^{\Omega(p)}}{\prod_{r=1}^{m_{p}}a_r(p)!},
\end{equation}
where $\Omega(p)$ is the prime (big) omega function.

Leading asymptotic expansion of the sum is the fastest growing term
\begin{equation}
	\#(k, D) \sim D^{\max_{p\leq k} \Omega(p)} \sum_{l \in \arg\max_{p\leq k} \Omega(p)} \frac{1}{\prod_{r=1}^{m_{l}}a_r(l)!}.
\end{equation}

In the main body of the text we provide a simplified upper bound of this asymptotic expansion. It can be derived using two upper bounds. First, prime omega function can be bounded from above
\begin{equation}
	p = q_{1}(p)^{a_1(p)}\cdots q_{m_p}(p)^{a_{m_{p}}(p)} \geq 2^{a_1(p) + \dots + a_{m_{p}}(p)} = 2^{\Omega(p)} \rightarrow \Omega(p) \leq \log_2(p).
\end{equation}
Next, the remaining sum can be bounded from above
\begin{equation}
	 \sum_{l \in \arg\max_{p\leq k} \Omega(p)} \frac{1}{\prod_{r=1}^{m_{l}}a_r(l)!} \leq \sum_{l=1}^{k} \frac{1}{\prod_{r=1}^{m_{l}}a_r(l)!} \leq k.
\end{equation}
These two upper bound combined gives us
\begin{equation}
	\#(k, D) \leq k D^{\log_2(k)}.
\end{equation}

\begin{figure}[t]
	\centering
	\begin{subfigure}[t]{0.3\textwidth}
        		\centering
		\includegraphics[width=0.8\textwidth,trim={2cm 2cm 2cm 2cm},clip]{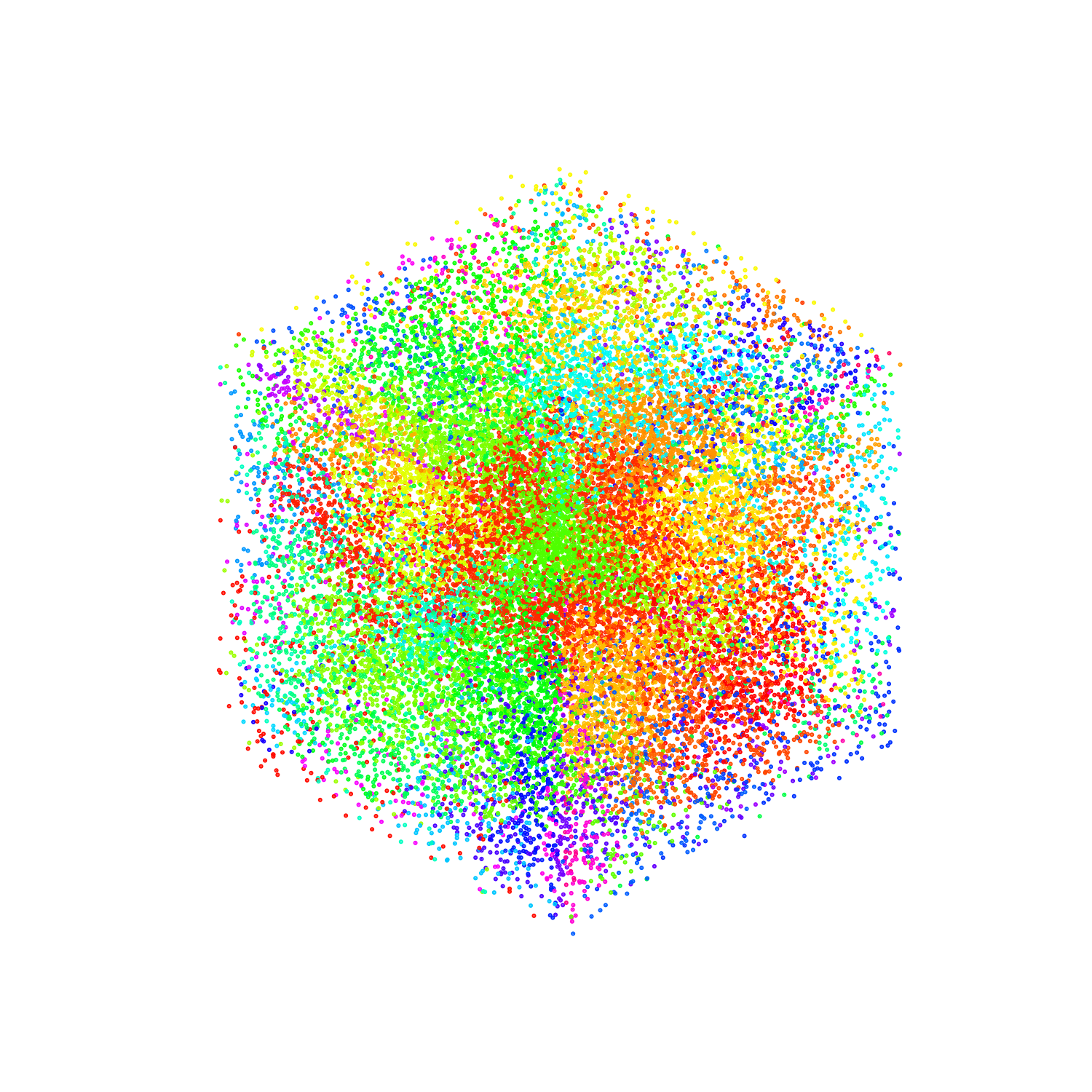}
		\vspace*{0.54cm} 
		\caption{$N_\text{sub}=10$}
		\label{fig:subspaces_a}
	\end{subfigure}
	~
	\begin{subfigure}[t]{0.3\textwidth}
        		\centering
		\includegraphics[width=0.8\textwidth,trim={2cm 2cm 2cm 2cm},clip]{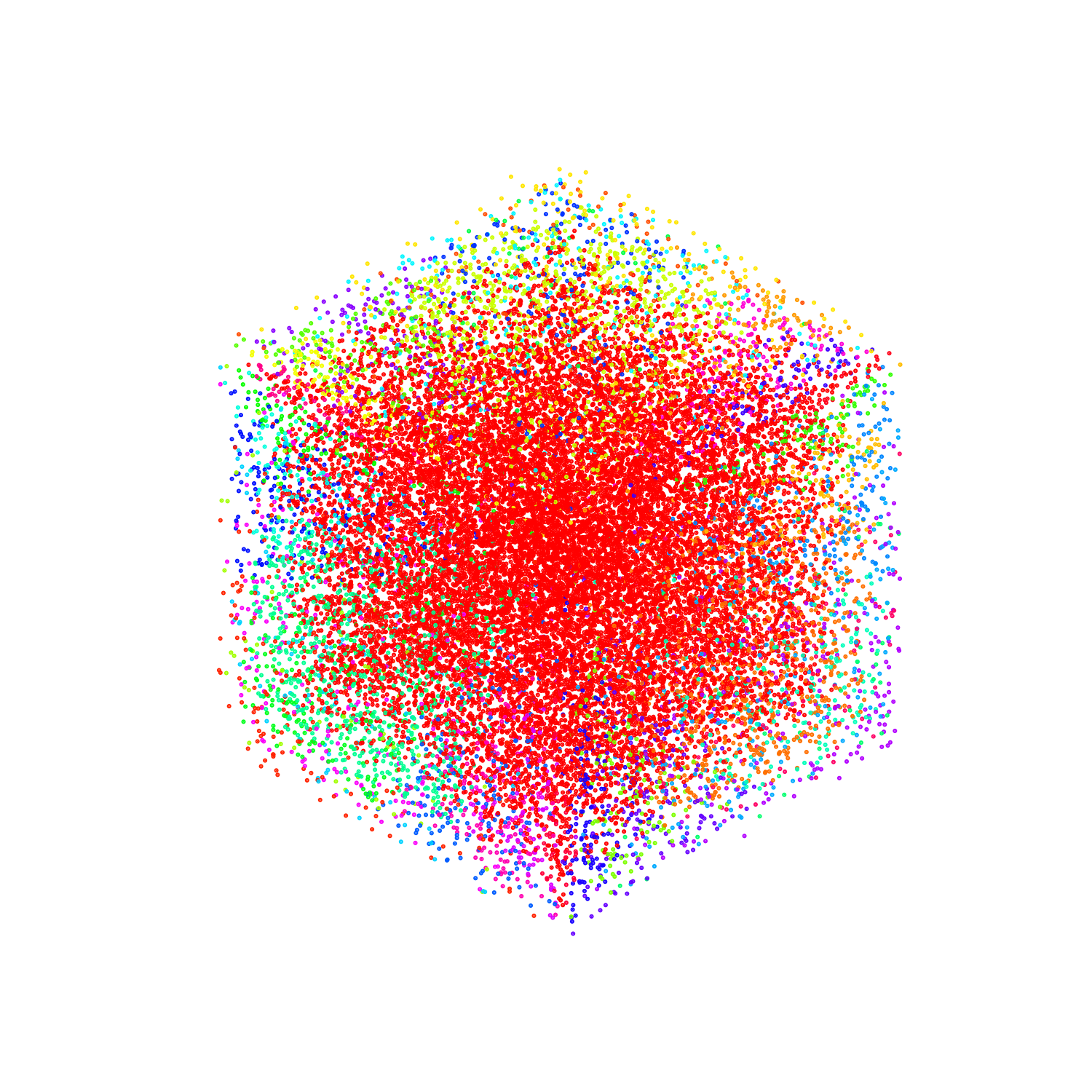}
		\vspace*{0.54cm} 
		\caption{$N_\text{sub}=20$}
		\label{fig:subspaces_b}
	\end{subfigure}
	~
	\begin{subfigure}[t]{0.3\textwidth}
        		\centering
		\includegraphics[width=0.8\textwidth,trim={2cm 2cm 2cm 2cm},clip]{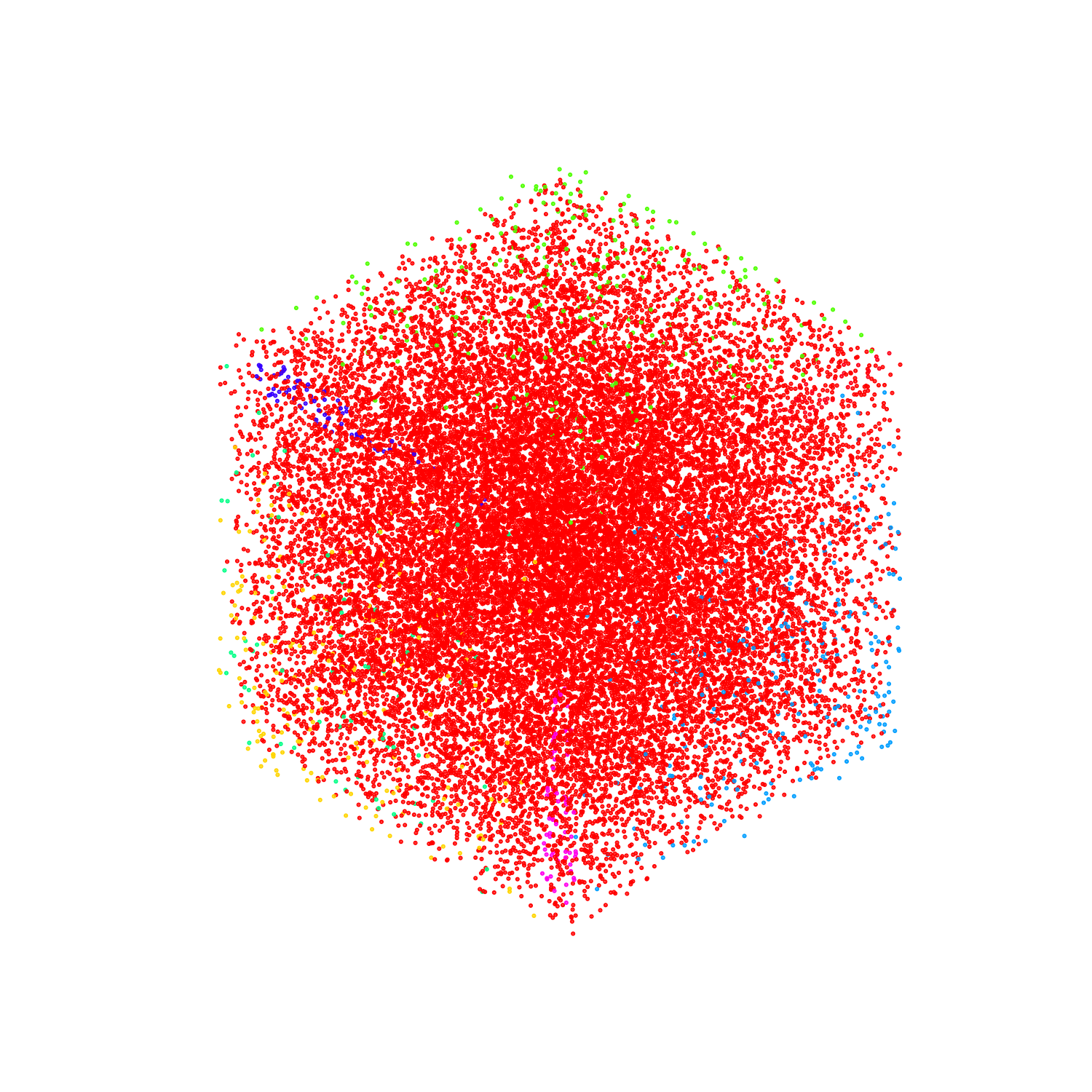}
		\vspace*{0.54cm} 
		\caption{$N_\text{sub}=40$}
		\label{fig:subspaces_c}
	\end{subfigure}
	\caption{Illustration of simple greedy subspace embedding technique for elliptic eigenproblem with constant coefficients. See Appendix~\ref{appendix:monte_carlo_example} for details.}
	\label{fig:counting_theorem}
\end{figure}

\subsection{Part 5.}
We were unable to compute exact asymptotic expansions for the number of subspaces, so our strategy in this and next section will be to derive sufficiently strong lower bound by counting selected ways subspaces can be formed.

Consider $D=2$ and $k=4$. Since surfaces with constant energies are ellipsoids, we can select $a_1 = 1$ large $a_2$ and by gradual decrease of $a_2$ we will observe three distinct subspaces:
\begin{equation}
	\left\{\phi_{1, 1}, \phi_{1, 2}, \phi_{1, 3}, \phi_{1, 4}\right\} \rightarrow \left\{\phi_{2, 1}, \phi_{1, 1}, \phi_{1, 2}, \phi_{1, 3}\right\}  \rightarrow \left\{\phi_{2, 1}, \phi_{2, 2}, \phi_{1, 1}, \phi_{1, 2}\right\}.
\end{equation}
Similarly, starting from $a_2 =1$ and large $a_1$ decrease of $a_1$ lead to the sequence of subspaces
\begin{equation}
	\left\{\phi_{1, 1}, \phi_{2, 1}, \phi_{3, 1}, \phi_{4, 1}\right\} \rightarrow \left\{\phi_{1, 2}, \phi_{1, 1}, \phi_{2, 1}, \phi_{3, 1}\right\}  \rightarrow \left\{\phi_{1, 2}, \phi_{2, 2}, \phi_{1, 1}, \phi_{2, 1}\right\}.
\end{equation}
Sequences above can be understood as systematic fillings of lattice points along second and first dimensions. First sequence corresponds to filling $(0, 4)$, $(1, 3)$, $(2, 2)$ points, and second to $(4, 0)$, $(3, 1)$, $(2, 2)$. In $D=2$ the number of distinct subspaces $V_k$ constructed in this way equals the number of unordered pairs $k_1, k_2 \geq 0$ such that $k_1 + k_2 = k$. 

In arbitrary $D$ similarly constructed states can be counted as the number of unordered tuples $(k_1,\dots, k_D)$ of non-negative integers such that $\sum_{i=1}^{D}k_i = k$. This is a standard counting problem with the answer
\begin{equation}
	\frac{(k + D - 1)!}{(D-1)! k!} \sim \frac{k^{D-1}}{(D-1)!}.
\end{equation}
This provides a lower bound on asymptotic expansion because our way to select subspaces is not exhaustive.

\subsection{Part 6.}
For fixed $k$ and large $D$ we consider eigenvectors with indices $\left(1,\dots, 1\right)$, $\left(1, 2, 1, \dots, 1\right)$, $\dots$, $\left(1, \dots, 1, 2, 1\right)$. It is clear that the first eigenvector has indices $\left(1,\dots, 1\right)$ and the rest of them can appear in arbitrary order. This gives us at least $\frac{D(D-1)\cdots (D - k+1)}{(k-1)!}\sim \frac{1}{(k-1)!}D^{k-1}$ subspaces.

\section{Monte Carlo experiments for eigenproblem with constant coefficients}
\label{appendix:monte_carlo_example}

To illustrate consequences of Theorem~\ref{th:parametric_eigenproblem} we perform simple Monte Carlo experiment. For the case $D=3$, we generate $a_1, a_2, a_3$ from uniform distribution on $[0, 1]$ repeatedly and record distinct subspaces of dimension $k=10$. We select unique colour for each subspace and draw them for each point $a_1, a_2, a_3$ on the plane perpendicular to $\begin{pmatrix}1 & 1 & 1\end{pmatrix}^\top$. This illustration appears in Figure~\ref{fig:subspaces_a}. Theorem~\ref{th:parametric_eigenproblem} suggest that the large number of distinct subspaces is a result of selection of $k$ vectors among a small number of candidates. This suggests we may decrease complexity of the function from coefficients to subspaces by predicting excessive number of vectors. In our experiments with neural networks this redundant mapping is learned, here we build the mapping with simple greedy strategy. In place of function $a_1,a_2,a_3\rightarrow V_{10}$ we consider $a_1,a_2,a_3\rightarrow V_{10}\cup \left\{v_1,\dots, v_k\right\}$ where $v_1, \dots, v_k$ are first $k$ most abundant eigenvectors. By appending additional vectors we decrease the number of distinct subspaces. The result of this greedy simplification appear in Figure~\ref{fig:subspaces_b} with $10$ additional vectors and in Figure~\ref{fig:subspaces_c} with $20$ additional vectors. As we see the number of distinct subspaces rapidly decreasing.

\section{Details on numerical experiments for eigenproblems}
\label{appendix:eigenproblems_numerics}

\subsection{Datasets}
\label{subsection:appendix:elliptic_datasets}
We generated two datasets for $D=2$ elliptic eigenproblem with uniform Dirichlet boundary conditions
\begin{equation}
	\label{eq:elliptic_eigenproblem_2d}
	\text{div } k\cdot\text{grad }\phi_i = \lambda_i \phi_i,\,\left\|\phi_i\right\|_{2} = 1,\,x\in[0,1]^2.
\end{equation}
For both datasets we used uniform grid $100\times100$ and finite-difference discretisation. Components of diffusion coefficients were generated from the same distribution for both datasets. Diffusion coefficient is generated as follows:
\begin{enumerate}
	\item Gaussian random field $\psi$ is generated from $\mathcal{N}\left(0, \left(\text{id} - \gamma\Delta\right)^{r}\right)$, $\gamma = \frac{1}{20 \pi}$, $r=\frac{1}{2}$.
	\item Diffusion coefficient is computed as $a = \alpha + (\beta - \alpha)\left(\tanh\left(s \psi \right) + 1\right)/2$ with $\alpha = 1$, $\beta = 50$, $s = 1$.
\end{enumerate}
For one $D=2$ dataset $k_1 = k_2$ and for another $k_1 \neq k_2$ but both are i.i.d. random fields generated as described above. In the main text only results for $k_1 = k_2$ are reported.

For $D=3$ elliptic eigenproblem we use setup analogous to $D=2$ but grid of size $30\times30\times30$ and $k_1 = k_2 = k_3$ generated the same way as explained above with parameters $\gamma = \frac{1}{100}$, $r = \frac{3}{2}$, $\alpha = 50$, $\beta = 1$, $s = 2$.

For QM problems datasets are defined by distributions for potential functions.

For $D=1$ we use
\begin{equation}
	V(r) = d \left(1 - \exp\left(-\frac{\frac{r}{r_e} - 1}{\frac{r}{r_e} + 1} p\left(r\right)\right)\right)^2, p(r) = \begin{cases}q_1\left(\frac{r}{r_e}\right),\,r < r_e\\q_2\left(\frac{r}{r_e}\right),\,r \geq r_e\end{cases},
\end{equation}
where 
\begin{equation}
	q_{1}(x) = \left(1 - \frac{x - 1}{x + 1}\right) \widetilde{q}_1(x) + c \frac{x - 1}{x + 1},
\end{equation}
and  $\widetilde{q}_1(x)$ is a polynomial of degree $\text{deg}$.  Polynomial $q_2(x)$ has the same form.

In $D=1$ dataset is by selecting uniform grid with $100$ points on the interval $[0, 10]$, $r_e$ is sampled from uniform distribution on the interval $[1, 8]$, $d$ is sampled from uniform distribution on the interval $[10, 40]$, both $q_1$ and $q_2$ has order $10$, for $q_1$ all coefficients (including $c$) are sampled from uniform distribution on $[0, 5]$, for $q_2$ coefficients of $\widetilde{q}_2$ are sampled from uniform distribution on $[0, 10]$ and $c$ is sampled from uniform distribution on the interval $[1, 11]$.

For $D=2$ 
\begin{equation}
	V(x, y) = V_1\left(\sqrt{(x - cu)^2 + (y - cv)^2}\right) + V_2\left(\sqrt{(x + cu)^2 + (y + cv)^2}\right),
\end{equation}
where $u, v$ are component of random normalised vector, $c = \sqrt{2} r_e$. Potentials $V_1$ and $V_2$ are i.i.d. with parameters: order of polynomial is $2$, $r_e$ is uniformly distributed on $[1, 5]$, $d$ is uniformly distributed on $[10, 40]$, coefficients of $\widetilde{q}$ are sampled from uniform distribution on $[0, 3]$ and $c$ is sampled from uniform distribution on $[10, 13]$. To discretise the problem we use finite difference and uniform $100\times 100$ grid on $[-7, 7]^2$.

\subsection{Architectures and training}
In all cases we used FFNO architecture, with GELU activation functions, that is completely specified by: number of layers $N_{\text{layers}}$, numbers of features in hidden layer $N_{\text{features}}$, number of Fourier modes in spectral convolution $N_{\text{modes}}$. Since all our loss functions are scale-invariant, the output of FFNO architecture was normalised.

We use Lion optimiser \cite{chen2023symbolic}, with weight decay. Parameters of the optimiser are learning rate $\text{lr}$, rate decay factor $\gamma_{\text{decay}}$ and number of transition steps $N_{\text{decay}}$.

For $D=2$, eigenvalue and QM problems, and also $D=1$ QM problem we perform grid search with parameters:  $N_{\text{layers}} \in [3, 4, 5]$, $N_{\text{features}} = 64$, $N_{\text{modes}} \in [10, 14, 16]$, $\text{lr}\in [10^{-3}, 10^{-4}]$, $\gamma_{\text{decay}} = 0.5$, $N_{\text{decay}} \in [100, 200]$, batch size was fixed to $100$, number of train samples is $4000$, number of test samples is $1000$. Architecture is training to approximate subspace spanned by first $10$ eigenvectors. Number of epoch is $1000$. When architecture is trained to predict individual eigenvectors, the same grid search applies.

Since architectures used are neural operators, we were able to speed up grid search by a factor of $7$ by first training architecture on coarse grid $32\times32$, and then retraining $3$ most accurate architectures (according to the results of grid search on the coarse grid) of the fine $100\times 100$ grid.

For $D=3$ grid search is not practical, so we select $N_{\text{layers}} = 4$, $N_{\text{features}} = 128$, $N_{\text{modes}} = 16$, $\text{lr} = 10^{-3}$,  $\gamma_{\text{decay}} = 0.5$, $N_{\text{decay}} = 100$. The size of the train set is $800$, the size of test set is $200$. The number of epochs is $1000$. Architecture is training to approximate subspace spanned by first $3$ eigenvectors.

\begin{table}[b!]
\caption{Comparison of $L_1(A, B)$ and $L_2(A, B; z)$ loss functions for $k_{1}=k_{2}$}
\label{a1_eq_a2_loss_comparison}
\begin{center}
\hspace*{-0.5cm}  
\begin{tabular}{lllllll}
\toprule
& \multicolumn{3}{c}{$L_1(A, B)$} &  \multicolumn{3}{c}{$L_2(A, B; z)$}\\
\cmidrule(r){2-4}\cmidrule(r){5-7}
$N_{\text{sub}}$ & $E_{\text{train}}$ & $E_{\text{test}}$ & $t_{\text{train}},$ s & $E_{\text{train}}$ & $E_{\text{test}}$ & $t_{\text{train}},$ s \\
\midrule
$10$ & $[0.216, 0.235]$ & $[0.292, 0.314]$ & $4124\pm443$ & [0.244, 0.252] & [0.296, 0.302] & 3895$\pm$524 \\
$20$ & $[0.038, 0.048]$ & $[0.046, 0.052]$ & $5962\pm93$ & [0.046, 0.052] & [0.049, 0.058] & 4074$\pm$215 \\
$30$ & $[0.024, 0.029]$ & $[0.028, 0.033]$ & $7902\pm85$ & [0.026, 0.033] & [0.029, 0.037] & 3973$\pm$515 \\
$40$ & $[0.018, 0.025]$ & $[0.021, 0.029]$ & $10842\pm570$ & [0.017, 0.024] & [0.02, 0.027] & 4270$\pm$95 \\
\bottomrule
\end{tabular}
\end{center}
\end{table}

\begin{table}[b!]
\caption{Comparison of $L_1(A, B)$ and $L_2(A, B; z)$ loss functions for $k_{1}\neq k_{2}$}
\label{a1_neq_a2_loss_comparison}
\begin{center}
\hspace*{-0.5cm}  
\begin{tabular}{lllllll}
\toprule
& \multicolumn{3}{c}{$L_1(A, B)$} &  \multicolumn{3}{c}{$L_2(A, B; z)$}\\
\cmidrule(r){2-4}\cmidrule(r){5-7}
$N_{\text{sub}}$ & $E_{\text{train}}$ & $E_{\text{test}}$ & $t_{\text{train}},$ s & $E_{\text{train}}$ & $E_{\text{test}}$ & $t_{\text{train}},$ s \\
\midrule
$10$ & $[0.305, 0.377]$ & $[0.407, 0.421]$ & $4035\pm295$ & [0.312, 0.335] & [0.386, 0.389] & 3661$\pm$308 \\
$20$ & $[0.066, 0.089]$ & $[0.09, 0.105]$ & $5817\pm219$ & [0.092, 0.092] & [0.103, 0.103] & 4262$\pm0$ \\
$30$ & $[0.05, 0.05]$ & $[0.063, 0.063]$ & $7966\pm84$ & [0.042, 0.05] & [0.052, 0.059] & 3991$\pm$168 \\
$40$ & $[0.035, 0.036]$ & $[0.045, 0.047]$ & $11019\pm252$ & [0.034, 0.038] & [0.041, 0.046] & 4238$\pm$112 \\
\bottomrule
\end{tabular}
\end{center}
\end{table}

\begin{table}[b!]
\caption{$\mathbb{Z}_2$-adjusted $L_2$ loss.}
\label{z2_l2_loss}
\begin{center}
\hspace*{-0.5cm}  
\begin{tabular}{lllllll}
\toprule
& \multicolumn{3}{c}{$k_1=k_2$} &  \multicolumn{3}{c}{$k_1\neq k_2$}\\
\cmidrule(r){2-4}\cmidrule(r){5-7}
$N_{\text{eig}}$ & $E_{\text{train}}$ & $E_{\text{test}}$ & $t_{\text{train}},$ s & $E_{\text{train}}$ & $E_{\text{test}}$ & $t_{\text{train}},$ s \\
\midrule
$0$ & $[0.009, 0.012]$ & $[0.036, 0.038]$ & $3666\pm476$ & $[0.014, 0.028]$ & $[0.068, 0.07]$ & $3280\pm60$ \\
$1$ & $[0.038, 0.042]$ & $[0.158, 0.165]$ & $3312\pm77$ & $[0.031, 0.033]$ & $[0.196, 0.218]$ & $3956\pm162$ \\
$2$ & $[0.046, 0.048]$ & $[0.359, 0.373]$ & $4168\pm82$ & $[0.045, 0.053]$ & $[0.553, 0.563]$ & $4140\pm87$ \\
$3$ & $[0.046, 0.057]$ & $[0.541, 0.555]$ & $4168\pm81$ & $[0.054, 0.066]$ & $[0.747, 0.779]$ & $4101\pm87$ \\
$4$ & $[0.068, 0.084]$ & $[0.754, 0.769]$ & $4007\pm204$ & $[0.072, 0.078]$ & $[0.945, 0.97]$ & $4101\pm87$ \\
$5$ & $[0.073, 0.075]$ & $[0.897, 0.905]$ & $4041\pm264$ & $[0.084, 0.094]$ & $[1.087, 1.098]$ & $4090\pm83$ \\
\bottomrule
\end{tabular}
\end{center}
\end{table}

\newpage

\subsection{Additional results for elliptic eigenproblems}
Additional results are available in Table~\ref{a1_eq_a2_loss_comparison}, Table~\ref{a1_neq_a2_loss_comparison}, Table~\ref{z2_l2_loss}. Results in brackets indicate worst and best observed result among three best grid search runs.

\subsection{Generalisation to different grid size}
\label{appendix:generalisation_to_different_grid_size}
For all subspace regression problems we use FFNO. Since FFNO is neural operator it should be discretisation agnostic. Here we report results for model trained on grid $100$ on $D=1$ quantum mechanics eigenproblem and tested on grids of higher resolution. For each grid we generated new test set from the same distribution as specified in Appendix~\ref{subsection:appendix:elliptic_datasets}.

The results are available in Table~\ref{table:discretisation_invariance}. We see approximately linear increase of relative error with resolution. Owning to good initial accuracy on grid $N_x=100$ the relative error remains under $10\%$ for grid with $N_x=500$.

\begin{table}[h!]
\caption{Network is trained on resolution $N_x=100$ for $D=1$ QM problem and evaluated on grids with increased resolution.}
\label{table:discretisation_invariance}
\begin{center}
\hspace*{-0.5cm}  
\begin{tabular}{llllllllll}
\toprule
$N_x$ & $100$ & $150$ & $200$ & $250$ & $300$ & $350$ & $400$ & $450$ & $500$\\
test error, $\%$ & $0.83$ & $1.37$ & $2.12$ & $2.95$ & $3.96$ & $4.32$ & $5.33$ &  $5.32$ & $6.79$ \\
\bottomrule
\end{tabular}
\end{center}
\end{table}

\subsection{Smoothness of neural networks trained with subspace embedding technique}
\label{appendix:smoothness_results}

To empirically measure smoothness of learned map, we introduce several ``smoothness indicators'':
\begin{enumerate}
	\item Taylor indicator 
	\begin{equation}
		\label{eq:taylor_smoothness}
		T[f_1, f_2; l] = \frac{\left\|\mathcal{N}(f_1 + l f_2) -  \mathcal{N}(f_1) - \left.\frac{d}{dl} \mathcal{N}(f_1 + l f_2)\right|_{l=0}l\right\|_2}{\left\|\mathcal{N}(f_1 + l f_2)\right\|_2}.
	\end{equation}
	Taylor indicator is a relative error of linear model. One expect that: (i) unless $\mathcal{N}$ is a linear function, when $l$ increases relative error also increases; (ii) when smoothness increases Taylor indicator decreases.
	\item Average cosine
	\begin{equation}
		\label{eq:average_cosines}
		C[f_1, f_2;l] = \frac{1}{D}\sum_{i=1}^{D} \cos_{i}(\mathcal{N}(f_1 + l f_2), t_1).
	\end{equation}
	where $\cos_i(A, B)$ are cosines of principle angles \cite{bjorck1973numerical} and $t_1$ is target at point $f_1$. For smoother maps average cosine increases until it reaches maximal value of $1$.
	\item Frobenius norm of the directional derivative
	\begin{equation}
		\label{eq:derivative_frobenius}
		F[f_1, f_2; l] = \frac{1}{D}\left\|\left.\frac{d}{dl} \mathcal{N}(f_1 + l f_2)\right|_{l=0}\right\|_F,
	\end{equation}
	where $D$ is the subspace size. The magnitude of the directional derivative is computed by automatic differentiation and it is expected to decrease when smoothness increases.
\end{enumerate}

\begin{wraptable}{r}{6.5cm}
\caption{Frobenius norm of the directional derivative smoothness indicator (\ref{eq:derivative_frobenius}).}
\label{table:derivative_smoothness}
\begin{center}
\begin{tabular}{lll}
\toprule
$N_{\text{subspace}}$ & $k_1=k_2$ & $k_1\neq k_2$ \\
\midrule
$10$ & $10.73$ & $8.26$\\
$20$ & $9.72$ & $8.15$\\
$30$ & $8.78$ & $8.14$\\
$40$ & $8.26$ & $6.88$\\
\bottomrule
\end{tabular}
\end{center}
\end{wraptable}

Each indicator depends on two features $f_1$ and $f_2$ and real number $l\in[0, 1]$. Results are reported for neural network trained to predict eigenspaces for elliptic eigenproblem with $k_1=k_2$ and $k_1\neq k_2$ (see Appendix~\ref{subsection:appendix:elliptic_datasets} for description) with loss function $L_2(A, B; z)$. For each indicator we provide values for several $l$ averaged over $1000$ randomly selected feature pairs $f_1$, $f_2$. Results are reported in Table~\ref{table:derivative_smoothness}, Table~\ref{table:taylor_smoothness}, Table~\ref{table:average_cosines_smoothness}.

All indicators clearly demonstrate the improve in smoothness when the size of embedding $N_{\text{subspace}}$ increases. Results for average cosine indicator (\ref{eq:average_cosines}) indicate that learned mapping effectively average information about subspaces for distinct features. It becomes especially clear if one compares results for $l=1.0$ with average cosines computed between targets $\frac{1}{D}\sum_{i=1}^{D} \cos_{i}(t_1, t_2)$: for $k_1=k_2$ average cosine is $0.51$; for $k_1\neq k_2$ average cosine is $0.53$.

\begin{table}[h!]
\caption{Taylor smoothness indicator (\ref{eq:taylor_smoothness}).}
\label{table:taylor_smoothness}
\begin{center}
\hspace*{-0.5cm}  
\begin{tabular}{lllllllll}
\toprule
& \multicolumn{4}{c}{$k_1=k_2$} &  \multicolumn{4}{c}{$k_1\neq k_2$}\\
\cmidrule(r){2-5}\cmidrule(r){6-9}
$N_{\text{subspace}}$ & $l=0.25$ & $l=0.5$ & $l=0.75$ & $l=1.0$ & $l=0.25$ & $l=0.5$ & $l=0.75$ & $l=1.0$ \\
\midrule
$10$ & $2.41$ & $5.1$ & $7.48$ & $10.71$ &  $1.8$ & $3.87$ & $5.99$ & $8.21$\\
$20$ & $2.12$ & $4.57$ & $7.05$ & $9.68$ &  $1.72$ & $3.77$ & $5.86$ & $8.08$\\
$30$ & $1.88$ & $4.09$ & $6.34$ & $8.73$ &  $1.71$ & $3.77$ & $5.87$ & $8.08$\\
$40$ & $1.76$ & $3.84$ & $5.98$ & $8.22$ &  $1.42$ & $3.15$ & $4.94$ & $6.8$\\
\bottomrule
\end{tabular}
\end{center}
\end{table}

\begin{table}[h!]
\caption{Average cosine indicator (\ref{eq:average_cosines}).}
\label{table:average_cosines_smoothness}
\begin{center}
\hspace*{-0.5cm}  
\begin{tabular}{lllllllll}
\toprule
& \multicolumn{4}{c}{$k_1=k_2$} &  \multicolumn{4}{c}{$k_1\neq k_2$}\\
\cmidrule(r){2-5}\cmidrule(r){6-9}
$N_{\text{subspace}}$ & $l=0.25$ & $l=0.5$ & $l=0.75$ & $l=1.0$ & $l=0.25$ & $l=0.5$ & $l=0.75$ & $l=1.0$ \\
\midrule
$10$ & $0.81$ & $0.74$ & $0.67$ & $0.51$ &  $0.82$ & $0.74$ & $0.67$ & $0.53$\\
$20$ & $0.94$ & $0.88$ & $0.82$ & $0.68$ &  $0.94$ & $0.88$ & $0.83$ & $0.72$\\
$30$ & $0.97$ & $0.93$ & $0.88$ & $0.78$ &  $0.97$ & $0.93$ & $0.89$ & $0.81$\\
$40$ & $0.98$ & $0.95$ & $0.91$ & $0.82$ &  $0.99$ & $0.96$ & $0.93$ & $0.87$\\
\bottomrule
\end{tabular}
\end{center}
\end{table}

\subsection{Subspace regression combined with LOBPCG}
\label{appendix:LOBPCG_results}
Subspace regression can be also used to improve results for classical iterative eigensolvers. We demonstrate this for LOBPCG, which is a matrix-free iterative eigensolver that can approximate extremal eigenspaces \cite{knyazev2001toward}. A notable feature of LOBPCG is a possibility of hot start: when approximation to eigenspace of interest is available, it can be used at the initialisation to speed up convergence. In Table~\ref{table:LOBPCG_plus_subreg} we report such speed up for elliptic eigenproblems described in Appendix~\ref{subsection:appendix:elliptic_datasets}. Metrics in Table~\ref{table:LOBPCG_plus_subreg}are computed for test set, and maximal number of iterations for LOBPCG is set to $1000$. Convergence plots are available in FIgure~\ref{fig:LOBPCG_k1_eq_k2} and Figure~\ref{fig:LOBPCG_k1_neq_k2}. It is evident that initialisation by subspace regression lead to both smaller number of iteration and better final error. Note, that the overall cost of method is dominated by the cost of iterations.

\begin{table}[h!]
\caption{Performance of LOBPCG with initialisation by subspace regression compared with random initialisation, $N_{\text{it}}$ number of iterations until convergence.}
\label{table:LOBPCG_plus_subreg}
\begin{center}
\hspace*{-0.5cm}  
\begin{tabular}{llllll}
\toprule
 & & \multicolumn{2}{c}{$k_1=k_2$} &  \multicolumn{2}{c}{$k_1\neq k_2$}\\
\cmidrule(r){3-4}\cmidrule(r){5-6}
initialisation & $N_{\text{subspace}}$ & $N_{\text{it}}$ & relative error $\pm$ std & $N_{\text{it}}$ & relative error $\pm$ std\\
\midrule
subspace regression & $10$ & $410 \pm 245$ & $0.093\pm1.109$ & $332\pm213$ & $0.028\pm0.179$\\
subspace regression & $20$ & $288 \pm 197$ & $0.159\pm4.12$ & $249\pm174$ & $0.02\pm0.087$\\
subspace regression & $30$ & $274 \pm 209$ & $0.047\pm0.554$ & $227\pm172$ & $0.022\pm0.119$\\
subspace regression & $40$ & $263 \pm 200$ & $0.131\pm3.218$ & $221\pm160$ & $0.02\pm0.089$\\
random &  & $685\pm 265$ & $7.86\pm30.16$ & $609\pm231$ & $11.42\pm35.49$\\
\bottomrule
\end{tabular}
\end{center}
\end{table}

\begin{figure}[h!]
	\centering
	\begin{subfigure}[t]{0.44\textwidth}
        		\centering
		\includegraphics[width=1.1\textwidth]{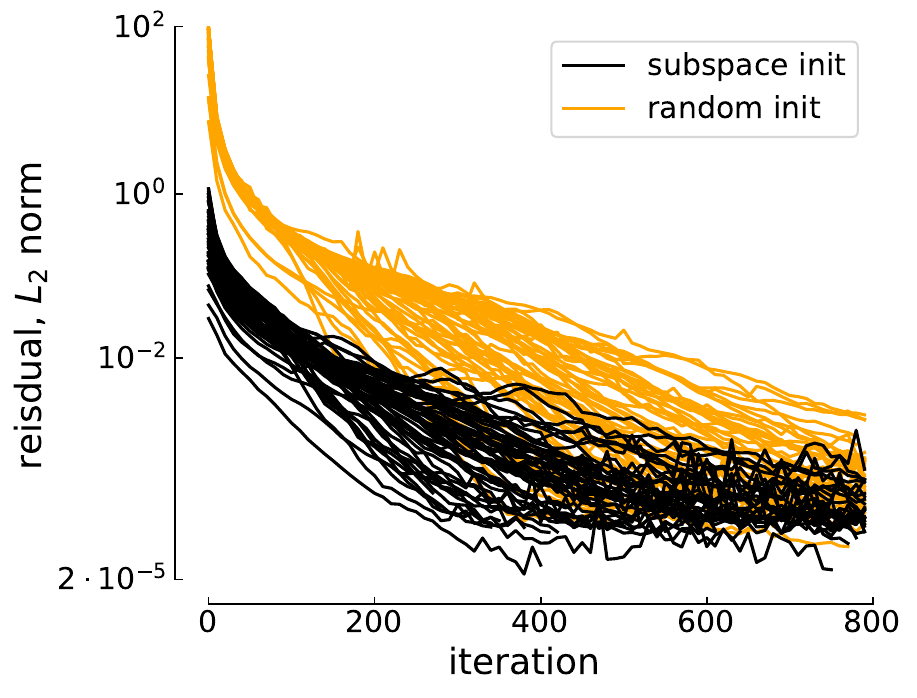} 
		\caption{$D=2$, $k_1=k_2$}
		\label{fig:LOBPCG_k1_eq_k2}
	\end{subfigure}
	\quad\quad
	\begin{subfigure}[t]{0.44\textwidth}
        		\centering
		\includegraphics[width=1.1\textwidth]{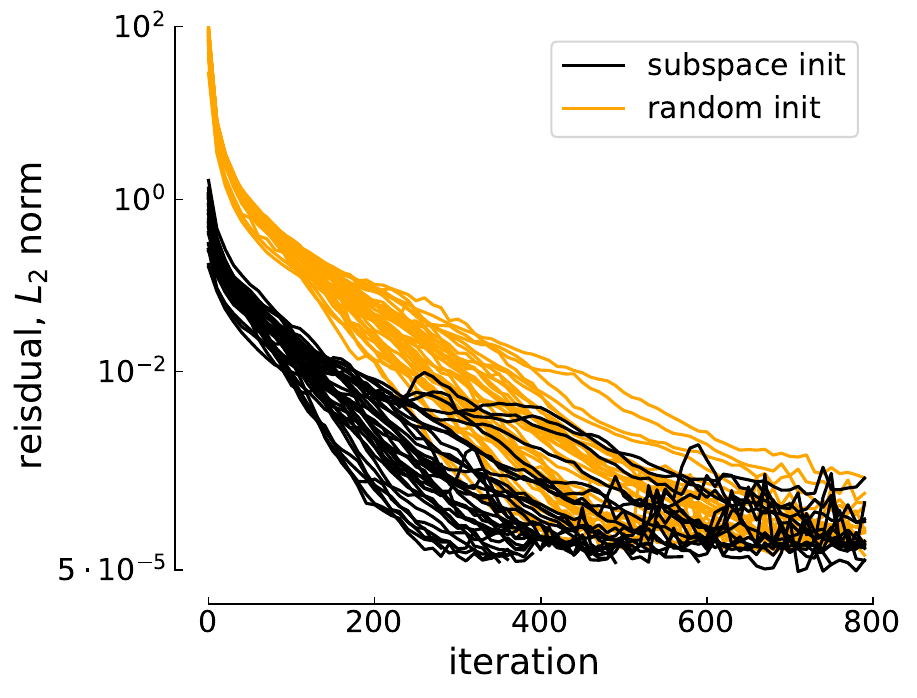}
		\caption{$D=2$, $k_1\neq k_2$}
		\label{fig:LOBPCG_k1_neq_k2}
	\end{subfigure}
	\caption{Convergence plots for LOBPCG with and without initialisation by subspace regression.}
\end{figure}

\section{Details on numerical experiments for parametric PDEs}
\label{appendix:PDEs_numerics}

\subsection{Datasets}
\label{appendix:subsection:PDEs_datasets}
For the $D=2$ stationary diffusion equation we reused datasets described in Appendix~\ref{appendix:eigenproblems_numerics}. To generate forcing terms for each $k$ we select $10$ eigenvectors $\psi_i, i=1,\dots, 10$ corresponding to smallest eigenvalues and compute exact solution as $u = \sum_i \phi_i z_i$ where $z_i \sim N(0, 1)$. Forcing term $f(x)$ corresponding to this solution is $f = \sum_i \frac{1}{\lambda_i}\phi_i z_i$.

For $D=1+1$ Burgers equation~(\ref{eq:Burgers}) we sample random diffusion coefficient $\nu(x)$ and initial condition $u_0(x)$. In both cases we first sample Gaussian random field from $\mathcal{N}\left(0, \left(\text{id} - \alpha \Delta\right)^s\right)$. To sample random field $\psi(x)$ for diffusion coefficient we use $\alpha = 40$, $s = 4$. Diffusion coefficient is computed as $\nu = 5\cdot 10^{-3} + (1 + \tanh(30 \psi)) / 20$ For initial conditions we use random field with $\alpha = 10$, $s = 2$. Uniform grid is used to discretise equation with $128$ points for $x$ and $64$ points for $t$. Time interval is $\left[0, 10^{-1}\right]$ and $x\in[0, 1]$.

\subsection{Architectures and training details}
We performed grid search for all methods that involve learning. We start by describing hyperparameters of architectures.

FFNO is used for subspace regression, standard regression, DeepPOD, and for intrusive techniques with basis extraction. Parameters of FFNO used for grid search are described in Appendix~\ref{appendix:eigenproblems_numerics}.

DeepONet is used for standard regression, and the intrusive technique with bases extracted from branch net. As a branch net of DeepONet we select a classical convolution network with downsampling by factor of $2$ along each dimension and the increase of the number of hidden features by factor of $2$. Branch net is defined by the number of features after encoder $N_{e, b}$, kernel size of convolution $k_{b}$, and number of layers $N_{b}$. Trunk net is MLP which is defined by the number of hidden neurons $N_{f, t}$, number of layers $N_{t}$ and the size of basis on the output layer $N_{\phi}$. In out grid searches we used $N_{e, b} \in [4, 5]$, $k_{b}\in[3, 7]$, $N_b = 4$, $N_{f, t} = N_{\phi} \in [100, 200]$, $N_t \in [3, 4]$.

\begin{figure}[t]
	\centering
	\begin{subfigure}[t]{0.44\textwidth}
        		\centering
		\includegraphics[width=1.05\textwidth]{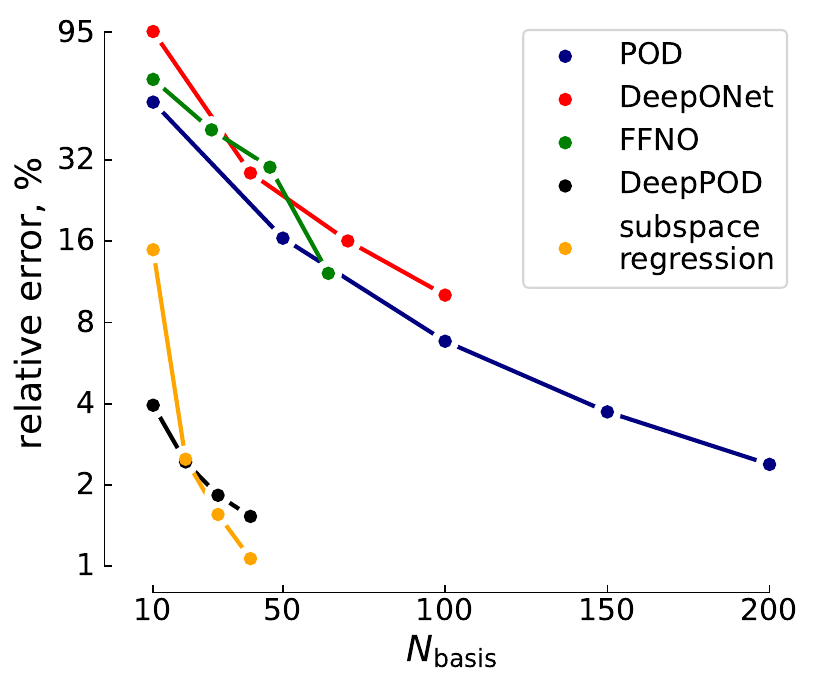} 
		\caption{$D=2$, $k_1=k_2$}
		\label{fig:intrusive_k1_eq_k2}
	\end{subfigure}
	\quad\quad\quad\quad
	\begin{subfigure}[t]{0.44\textwidth}
        		\centering
		\includegraphics[width=1.05\textwidth]{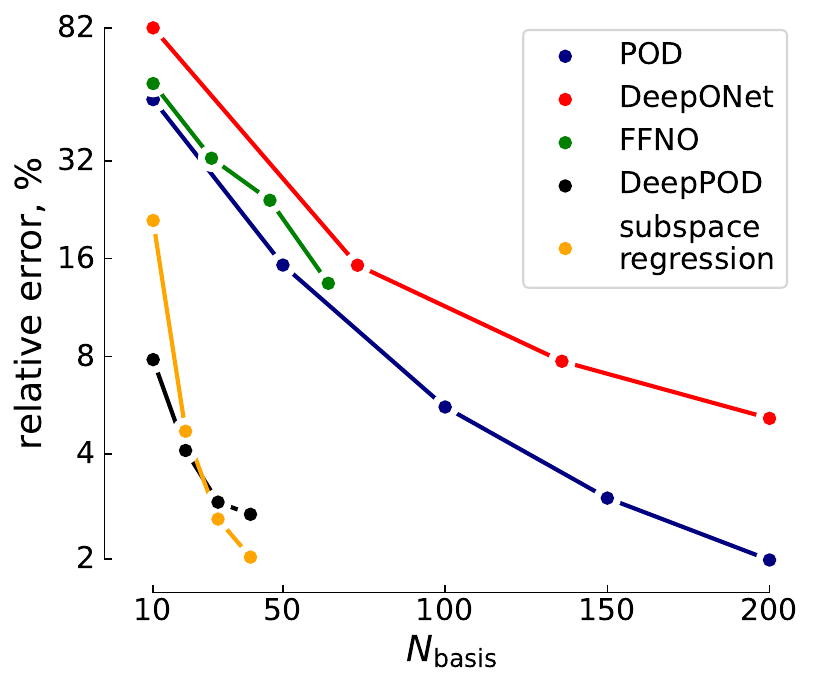}
		\caption{$D=2$, $k_1\neq k_2$}
		\label{fig:intrusive_k1_neq_k2}
	\end{subfigure}
	\caption{Relative errors for two stationary diffusion equations depending on the number of basis functions for several basis construction methods. Bases constructed with DeepPOD and subspace regression lead to the most accurate intrusive methods.}
	\vspace{-10pt}
\end{figure}

PCANet is defined by the number of POD basis functions used to compress feature and targets $N_{p, f}$ and $N_{p, t}$. Number of MLP layers $N_{MLP}$ and hidden units $N_{p, MLP}$. In our experiments we use $N_{p, f}\in[100, 200, 300, 400, 500]$ for elliptic equation and $N_{p, f}\in[50, 80, 100]$ for Burgers, $N_{p, t}\in[100, 200, 300, 400]$ for elliptic equation and $N_{p, t}\in[100, 150, 400]$ for Burgers equation, $N_{MLP} \in [3, 4, 5, 6]$ for elliptic equation and $N_{MLP} \in [3, 5, 7]$ for Burgers equation, $N_{p, MLP} \in [100, 200, 300, 400, 500]$ for elliptic and $N_{p, MLP} \in [100, 300, 500]$ for Burgers equation.

Hyperparameters of kernel methods are the type of kernel and the number of POD basis functions used to compress features and targets $N_{p, f}$ and $N_{p, t}$. We use Matern and RBF kernel and $N_{p, f}\in[50, 100, 150, 200]$, $N_{p, t}\in[50, 100, 150, 200]$

To train neural network we used Lion optimiser with $\text{lr}\in [5\cdot10^{-5}, 10^{-4}]$ for FFNO and $\text{lr}\in [10^{-3}, 10^{-4}]$ for all other architectures, $\gamma_{\text{decay}} = 0.5$, $N_{\text{decay}} \in [100, 200]$. We use batch size $10$, train PCANet for $3000$ epoch and other networks for $1000$ epoch.

\subsection{Additional results}
We provide two additional results. For elliptic equations we compare optimality of learned bases. The results are in Figure~\ref{fig:intrusive_k1_eq_k2} and Figure~\ref{fig:intrusive_k1_neq_k2}. Interestingly, global POD leads to a better basis than both FFNO and DeepONet. For DeepONet this is expected, since trunk net does not depend on parameters of PDE.
For FFNO the result is more surprising, because the basis is extracted from the last hidden layer, so it explicitly depends on parameters. Bases of DeepPOD and subspace regression are the most optimal one, still they are highly non-optimal when compared with local POD.

\begin{wraptable}{r}{8.0cm}
\caption{Relative errors for Burgers equation. Target for SubReg$(n)$ is subspace of dimension $n$.}
\label{table:Burgers_POD_subreg}
\begin{tabular}{llll}
\toprule
$N_{\text{subspace}}$ & DeepPOD & SubReg$(10)$ & SubReg$(5)$ \\
\midrule
$10$ & $16.37\%$ & $22.79\%$ & $14.34\%$\\
$20$ & $10.0\%$ & $11.73\%$ & $10.68\%$\\
$30$ & $5.42\%$ & $6.29\%$ & $5.03\%$\\
$40$ & $2.57\%$ & $3.46\%$ & $3.01\%$\\
$50$ & $1.8\%$ & $2.49\%$ & $1.74\%$\\
\bottomrule
\end{tabular}
\end{wraptable}
For the Burgers equation we compare DeepPOD and two variants of subspace regression in Table~\ref{table:Burgers_POD_subreg}. SubReg$(10)$ is a subspace regression trained to approximate subspace spanned by first $10$ local POD basis functions and SubReg$(5)$ was trained with $5$ local POD basis functions. DeepPOD is an unsupervised method and was trained with the whole trajectory. This implies methods are sorted from left to right in the decrease of information they receive about solutions. Interestingly, the SubReg$(5)$ -- method, learning the smallest subspace -- performs better almost uniformly. A possible explanation is that an optimal subspace of dimension $5$ leads to good enough accuracy and is easier to learn than larger subspaces.

\section{Details on numerical experiments for iterative methods}
\label{appendix:iterative_numerics}

\subsection{Deflation}
Since for a deflation problem one needs to approximate eigenspace spanned by eigenvectors with small eigenvalues, we reused dataset and network trained for elliptic eigenproblem. The description of training and datasets is available in Appendix~\ref{appendix:eigenproblems_numerics}. Results in the main text are for $k_1 = k_2$, for $k_1 \neq k_2$ convergence plots are given in Figure~\ref{fig:k1_neq_k2_convergence}.

\begin{figure*}[t]
\normalsize
\centering
    \begin{center}
        \includegraphics[width=0.5\textwidth]{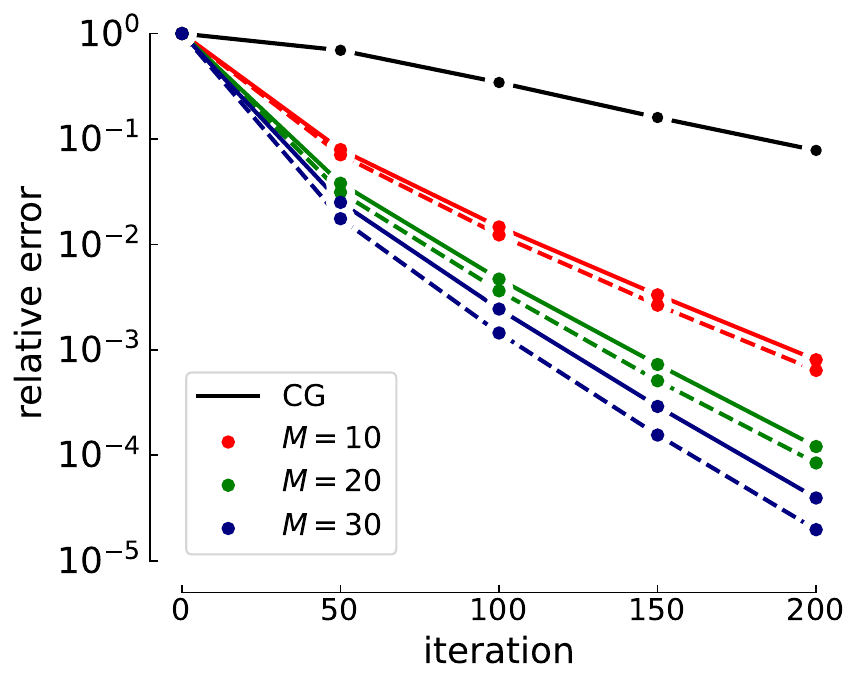}
    \end{center}
    \caption{Convergence results for deflated CG, elliptic dataset $D=2$ with $k_1 \neq k_2$. Learned methods are marked with solid lines, and dashed lines correspond to iterative methods with optimal deflation space, $M$ refers to subspace size.}
    \label{fig:k1_neq_k2_convergence}
\end{figure*}

\subsection{Two-grid method}

\paragraph{Elliptic equation dataset}
We consider a 2D elliptic equation on the unit square $\Omega=(0,1)^2$ with homogeneous Dirichlet boundary conditions~(\ref{eq:stationary_diffusion}). We aim to learn mapping $k(x) \rightarrow \mathcal{S}(V)$. Variability of the dataset comes from the spatially heterogeneous coefficient function $k(x)$.

Each sampled coefficient function is a strictly positive random field built in three steps:
\begin{enumerate}
  \item Draw i.i.d. Fourier coefficients on a square index set $\mathcal{K}=\{0,\dots,M-1\}^2$ and form a real field by summing  complex exponentials. We additionally introduce a Fourier-space weight $w_k = (1 + \lambda_1 |k|_2^2)^{-1}$ to control the high-frequency components.
  \item Multiply the real field from the previous step by $\lambda_2$, then apply a hyperbolic tangent function to control the contrast of the coefficient field values.
  \item Rescale the field to the prescribed interval $[\alpha, \beta]$ to ensure strict positivity and enforce a controlled contrast ratio of $\beta/\alpha$.
\end{enumerate}

Exact procedure to generate the 2D field is:
\begin{align*}
s_0(x,y) &= \text{Re} \left[\!\sum_{k\in\{0,\dots,M-1\}^2} c_k\,\frac{e^{\mathrm{i}(k_1 x+k_2 y)}}{1+\lambda_1\,\lVert k\rVert_2^2}\right],~~c_k \sim \mathcal{N}(0,1), \\[3pt]
s(x,y)  &= \tanh\!\big(\lambda_2\cdot s_0(x,y)\big), \\[3pt]
k(x,y)  &= \alpha + (\beta-\alpha)\,\frac{s(x,y)+1}{2},~~k(x,y)\in[\alpha,\beta].
\end{align*}
\begin{wrapfigure}{r}{0.4\textwidth}   
  \centering
  \includegraphics[width=0.38\textwidth]{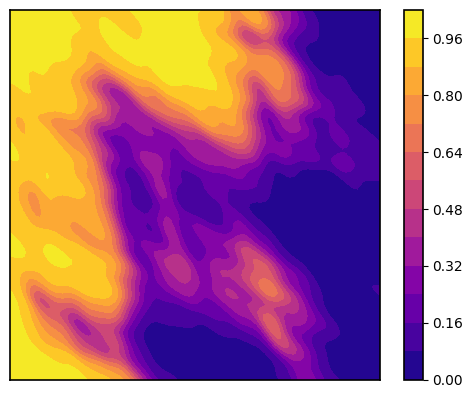} 
  \caption{Sample coefficient function.}
  \label{fig:twogrid_coef_func}
  \vspace*{2pt}
\end{wrapfigure}

The equation is descritized on a uniform grid with a 5-point finite-difference stencil, yielding a sparse, symmetric positive-definite matrix. One can observe a sampled normalized coefficient function in Figure~\ref{fig:twogrid_coef_func}.

\paragraph{Target subspace}
We aim to predict a coarse-grid subspace for the two-grid method, which applies a coarse-grid correction 
\begin{equation*}
x \leftarrow x + V(V^{\top}AV)^{-1}V^\top(b - Ax)
\end{equation*}
with weighted Jacobi smoothing 
\begin{equation*}
x \leftarrow x + \omega D^{-1}(b - Ax)
\end{equation*}
before and after. This coarse projection $V$ is learned as a problem-specific subspace from the coefficient field. In this setup, our projection matrix $V$ spans the leading eigenspace of the error propagation matrix $I - \omega D^{-1}A$. While the relaxation parameter $\omega$ is used to quickly dampen the fast modes, the coarse-grid projection removes the low-frequency components, resulting in rapid overall convergence. Thus, our target subspace for regression consists of the first few eigenvectors of $I - \omega D^{-1}A$, sorted by the absolute values of their eigenvalues.

\paragraph{Subspace relaxation}
Initially, we experiment with no under-relaxation (i.e. $\omega = 1.0$) in the Jacobi smoother. As expected, this leads to a target subspace that contains both high- and low-frequency modes. Neural networks are known to have a spectral bias towards low frequencies. In FNO-type models, this spectral bias is especially pronounced because these models truncate high-frequency modes in the Fourier domain. We address this by reducing the relaxation parameter to $\omega = 0.9$, which makes the leading subspace components dominated by slow modes. It is worth noting that this adjustment is consistent with the respective roles of smoothing and coarse-grid correction discussed above. In Figure~\ref{fig:twogrid_eigen_funcs} one can observe the first five eigenvectors of the target subspace for a representative sample under both relaxation settings ($\omega = 1.0$ and $\omega = 0.9$). 

\begin{figure*}[!h]
\normalsize
\centering
    \begin{center}
        \includegraphics[width=1.0\textwidth]{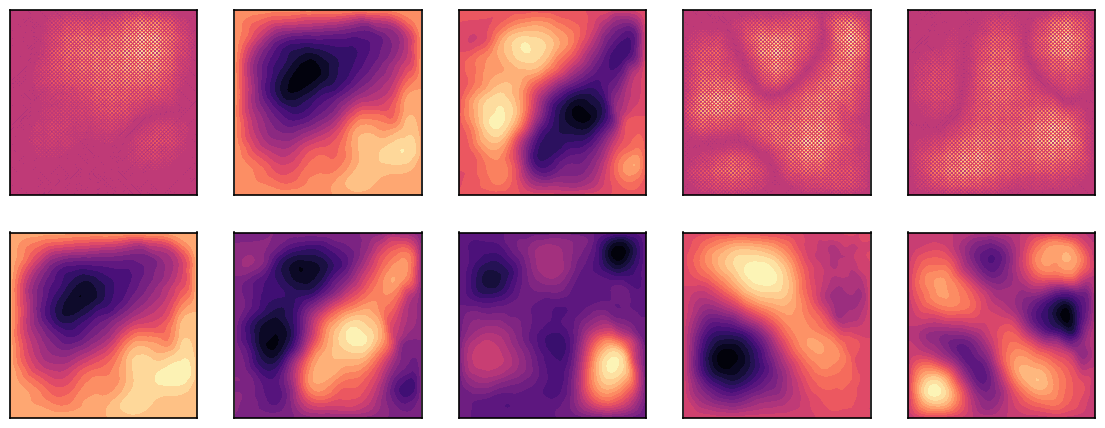}
    \end{center}
    \caption{First five eigenvectors of the error propagation matrix $I - \omega D^{-1}A$. Top: $\omega=1.0$. Bottom: $\omega=0.9$.}
    \label{fig:twogrid_eigen_funcs}
\end{figure*}

\paragraph{Subspace prediction}
Before applying the neural network’s predicted subspace, we perform a QR decomposition on the predicted matrix $W$ to obtain an orthonormal basis $Q_W$. We assess the quality of the predicted coarse subspace in the two-grid method using three metrics:
\begin{itemize}
    \item[1.] Cosine angles between the true subspace $V$ and the predicted subspace, computed as the singular values of $Q_W^\top V$. A value closer to 1 indicates better subspace alignment.
    \item[2.] Relative reconstruction error $e = \min_u \|V - W u\|_2$ for each true basis vector $V$, computed as $\|(I - Q_W Q_W^\top) V\|_2$. A smaller value indicates that the predicted subspace reconstructs $V_i$ more accurately.
    \item[3.] Two-grid convergence rate, measured by the spectral radius $\rho$ of the two-grid iteration operator $T$. We estimate $\rho$ via the power method by repeatedly applying $T$ to a vector: 
    \begin{equation*}
        v_{k+1} = \frac{T v_k}{\|T v_k\|}~.
    \end{equation*}
    A smaller spectral radius indicates faster asymptotic convergence.
\end{itemize}

In Table~\ref{table:twogrid_metrics}, we report these metrics for the best-performing models and for the ground-truth target subspace. Across all experiments, the predicted coarse subspaces achieve slightly better two-grid convergence (i.e., lower spectral radius) than the ground-truth subspace. These results are particularly interesting since smaller subspaces yield rather high cosines and reconstruction errors. The increase of size of the predicted subspace improves the quality of the reconstruction and does not degrade in effect on iteration compared to true exact subspace. It is also worth noting that both training objectives yield similarly effective subspaces.

\begin{table}[ht]
    \caption{Subspace prediction metrics for two-grid method and Jacobi iterations. We report averaged values over the test set. Methods' column values: \textit{Exact subspace} uses the exact leading eigenvectors of error propagation matrix (ground truth). \textit{$L_1$ loss} and \textit{$L_2$ loss} denote subspaces predicted by F-FNO trained with the respective objectives. The Jacobi baseline reports the spectral radius with no coarse correction ($\omega=1.0$).}
    \label{table:twogrid_metrics}
    \centering
    \begin{tabular}{@{}ccccc@{}} 
        \\\toprule
        Subspace size & Method & Cosine & Rec. error & Spectral radius \\\midrule
        $-$ & Jacobi & $-$ & $-$ & $0.9976$ \\\midrule
        \multirow{4}{*}{$10$} & Exact subspace & $-$ & $-$ & $0.9917$ \\\cmidrule(r){2-5}
         & $L_1$ loss & $0.845$ & $0.411$ & $0.9910$ \\\cmidrule(r){2-5}
         & $L_2$ loss & $0.859$ & $0.392$ & $0.9908$ \\\midrule
        \multirow{4}{*}{$20$} & Exact subspace & $-$ & $-$ & $0.9858$ \\\cmidrule(r){2-5}
         & $L_1$ loss & $0.960$ & $0.222$ & $0.9852$ \\\cmidrule(r){2-5}
         & $L_2$ loss & $0.962$ & $0.217$ & $0.9852$ \\\midrule
        \multirow{4}{*}{$30$} & Exact subspace & $-$ & $-$ & $0.9799$ \\\cmidrule(r){2-5}
         & $L_1$ loss & $0.986$ & $0.140$ & $0.9790$ \\\cmidrule(r){2-5}
         & $L_2$ loss & $0.986$ & $0.139$ & $0.9790$ \\\midrule
        \multirow{4}{*}{$40$} & Exact subspace & $-$ & $-$ & $0.9745$ \\\cmidrule(r){2-5}
         & $L_1$ loss & $0.994$ & $0.097$ & $0.9730$ \\\cmidrule(r){2-5}
         & $L_2$ loss & $0.994$ & $0.097$ & $0.9731$ \\
        \bottomrule
    \end{tabular}
\end{table}

\paragraph{Data and training details}
We generate two different datasets with $32$ and $100$ interior grid points. Both datasets use $M=100$ Fourier modes, $\lambda_1=0.1, \lambda_2=1,$ interval $[\alpha, \beta]=[1,50]$, and $\omega=0.9$ in error propagation matrix. Each dataset contains $1,000$ training and $200$ test samples. While a neural network predicts subspace of sizes $\{10, 20, 30, 40\}$, target subspace always contains $10$ basis functions.

We train the Factorized Fourier Neural Operator (F-FNO) model~\cite{tran2021factorized}. We first conduct an extensive hyperparameter search on the $32$ dataset with:
\begin{itemize}
    \item Number of retained modes: $\{10, 14, 16\}$.
    \item Number of processor layers: $\{3,4,5\}$.
    \item Learning rate: $\{10^{-3}, 10^{-4}\}$.
    \item Step-decay every $\{100, 200\}$ epochs.
\end{itemize}

By default, the batch size is $64$, training runs for $1,000$ epochs, and each processor layer has $64$ features. We repeat this search for both $L_1$ and $L_2$ losses and for predicted subspace sizes $\{10, 20, 30, 40\}$. We then select the top-$3$ hyperparameter configurations per subspace size and loss (by two-grid spectral radius) and train on the dataset with $100$ grid points. Throughout the paper, we report results for the best configuration on the dataset with $100$ grid points.

\section{Subspace regression for optimal control}
\label{appendix:icontrol_numerics}
We consider optimal control of $D=1+1$ heat equation with homogeneous Dirichlet boundary conditions
\begin{equation}
	\label{eq:heat_control}
	\begin{split}
		&\frac{\partial \phi(x, t)}{\partial t} = \text{div }k\cdot\text{grad }\phi(x, t)  - b(x) + \sum_{i=1}^{k}w_i(x) u_i(t),\\
		&y_i = \left(\psi_i, \phi\right)\\
		&\min_{u} L = \frac{1}{2}y(T)^\top y(T) + \frac{\lambda}{2}\int_{0}^{T}u(t)^\top u(t).
	\end{split}
\end{equation}
The problem has a simple interpretation. With no control for sufficiently large $T$ system reaches steady-state, which can be computed as a solution of the linear system $\text{div }k\cdot\text{grad }\phi(x, t) = b(x)$. The objective function contains the term $\frac{1}{2}y(T)^\top y(T) \geq 0$. Optimal control minimises  amplitude of $y(T)$. Given that $y(t)$ is a projection of state variable $\phi$ on vectors $\psi_i$, the result of optimal control is to reach $\phi$ with $\phi - \sum_{i}\left(\phi, \widetilde{\psi}_i\right)\widetilde{\psi}_i$ where $\widetilde{\psi}_i$ is any basis in subspace spanned by $\psi_i$.

Control problem~(\ref{eq:heat_control}) is not a linear-quadratic regulator in its standard form since $b(x)$ is present. To get rid of $b(x)$ and use exact solution for linear-quadratic regulator we first discretise PDE~(\ref{eq:heat_control}) and after that introduce additional (constant) variable
\begin{equation}
	\label{eq:heat_control_discretised}
	\begin{split}
		&\frac{d}{dt}\begin{pmatrix}\phi(t)\\\widetilde{\phi}(t)\end{pmatrix} = \begin{pmatrix}A&-I\\0&0\end{pmatrix}\begin{pmatrix}\phi(t)\\\widetilde{\phi}(t)\end{pmatrix}  + \begin{pmatrix}W\\0\end{pmatrix} u,\,\begin{pmatrix}\phi(0)\\\widetilde{\phi}(0)\end{pmatrix} = \begin{pmatrix}\phi_0\\b(x)\end{pmatrix},\\
		&\min_{u} L = \frac{1}{2}\phi(T)^\top\Psi \Psi^{\top}\phi(T) + \frac{\lambda}{2}\int_{0}^{T}u(t)^\top u(t).
	\end{split}
\end{equation}
Problem~(\ref{eq:heat_control_discretised}) is a standard linear-quadratic regulator, so the exact form of value function is known \cite{kirk2004optimal}. Optimal control can be computed as follows
\begin{equation}
	\begin{split}
		&\dot{C}_{12} - C_{11} + A C_{12} -\lambda^{-1} C_{11}WW^\top C_{12} = 0,\,C_{12}(T) = 0,\\
		&\dot{C}_{11} + C_{11}A + A C_{11} - \lambda^{-1} C_{11}W W^\top C_{11} = 0 ,\,C_{11}(T) = \Psi \Psi^{\top},\\
		&u(t) = -\lambda^{-1} W^\top \left(C_{11}\phi + C_{12} b(x)\right).
	\end{split}
\end{equation}
As model reduction method we apply balanced truncation closely following \cite{moore2003principal}: (i) solve Lyapunov equations to find controllability and observability Gramians, (ii) find eigendecomposition of controllability Gramian and select coordinates system where controllability Gramian is identity matrix, (iii) find eigendecomposition of observability Gramian and select coordinate system where controllability and observability Gramians coincide, (iv) from the composition of two coordinate transformations build degrees of freedom corresponding to largest eigenvalues of both controllability and observability Gramians.

\subsection{Dataset}
We use random gaussian random field $\mathcal{N}\left(0, \left(\text{id} - \alpha \Delta\right)^s\right)$ to generate $w_i(x), \psi_i(x)$, diffusion coefficient $k(x)$, initial conditions $u_{0}(x)$ and forcing $b(x)$. For $w_i$, $\psi_i$, $u_{0}(x)$, $b(x)$ we take $\alpha = 5$ and $n=4$, $\psi$ and $w_i$ are further orthogonalised with QR, $30$ i.i.d. $w_i$ and $\psi_i$ are generated for each dataset sample; for diffusion coefficient we use $\alpha = 6$, $n = 4$ and process generated random field $\chi$ similarly to Burgers equation $k(x) = 5\times 10^{-3} + (1 + \tanh(5 \chi))/10$. Equation is discretised on uniform grid $128\times128$, $x\in[0, 1]$, $t\in[0,5]$.  Dataset consists of $1200$ samples, $1000$ for train, $100$ for validation and $100$ for test. Optimal reduction by balanced truncation is computed for each sample and later used for subspace regression.

\begin{table}[t!]
    \caption{Results for control.}
    \label{table:control_results}
    \centering
\begin{tabular}{lllllll}
\toprule
$N_{\text{basis}}$ & \multicolumn{2}{c}{exact} & \multicolumn{2}{c}{$L_1(A, B)$} & \multicolumn{2}{c}{$L_2(A, B;z)$} \\
 \cmidrule(r){2-3} \cmidrule(r){4-5} \cmidrule(r){6-7}
 & $E_{s}$ & $E_{o}$ & $E_{s}$ & $E_{o}$ & $E_{s}$ & $E_{o}$\\
\midrule
$10$ & $4.16\%$ & $4.07\%$ & $12.56\%$ & $12.21\%$ & $10.91\%$ & $10.59\%$\\
$20$ &  &  & $8.8\%$ & $8.47\%$ & $7.96\%$ & $7.65\%$\\
$30$ &  &  & $9.49\%$ & $8.94\%$ & $8.74\%$ & $8.42\%$\\
$40$ &  &  & $7.26\%$ & $6.99\%$ & $7.41\%$ & $7.15\%$\\
$50$ &  &  & $7.2\%$ & $6.94\%$ & $7.09\%$ & $6.88\%$\\
\bottomrule
\end{tabular}
\end{table}

\subsection{Architecture and training details}
We use FFNO and precisely the same grid search as for the Burgers equation.

\subsection{Results}
Results are summarised in Table~\ref{table:control_results}. We train a neural network with two subspace regression losses on first $10$ basis vectors obtained with balanced truncation. As metrics we use relative observation error $E_{o}$ at time $T$ and relative full state error $E_{s}$ at time $T$.

One can observe that subspace embedding techniques improve accuracy for both loss functions. Interestingly, $L_2$ leads to slightly better error for small subspace sizes. Overall accuracy is acceptable but does not reach optimal performance reported in the first columns.

\section{Sensitivity to train-test split and the choice of hyperparameters}
\label{appendix:sensetivity}
All results in the main text are reported without error bars, that are usually computed by varying random initialisation or train-test split. All our experiments involve extensive hyperparameter search, so computing statistics for distinct initialisations or train-test split would require order of magnitude more compute. Here we demonstrate for elliptic eigenproblem (Appendix~\ref{subsection:appendix:elliptic_datasets}) and Burgers equation (Appendix~\ref{appendix:subsection:PDEs_datasets}) that sensetivity to train-test split is relatively small and not significant to main conclusions reached by analysis of ``single-run'' results.

For both Burgers and elliptic equations we randomly split dataset on train and test set $5$ times and report mean metrics and standard deviation. Networks are trained for hyperparameters found by grid search. The results are available in Table~\ref{table:sensitivity_train_test}. Variability is clearly present, but it is not pronounced enough.

\begin{table}[h!]
\caption{Sensitivity to train-test split for elliptic eigenproblem and Burgers equation, subspace regression trained with $L_2(A, B; z)$ loss function.}
\label{table:sensitivity_train_test}
\begin{center}
\hspace*{-0.5cm}  
\begin{tabular}{lllll}
\toprule
& \multicolumn{2}{c}{elliptic, $k_1=k_2$} &  \multicolumn{2}{c}{Burgers}\\
\cmidrule(r){2-3}\cmidrule(r){4-5}
$N_{\text{subspace}}$ & train error $\pm$ std, $\%$ & test error $\pm$ std, $\%$ & train error $\pm$ std, $\%$ & test error $\pm$ std, $\%$\\
\midrule
$10$ & $24.37\pm1.47$ & $30.38\pm1.34$ &  $25.5\pm1.17$ & $25.71\pm1.72$ \\
$20$ & $5.69\pm0.53$ & $6.46\pm0.26$ & $11.15\pm1.2$ & $11.0\pm1.43$\\
$30$ & $3.34\pm0.26$ & $3.79\pm0.35$ & $7.04\pm0.35$ & $7.01\pm0.64$\\
$40$ & $1.84\pm0.12$ & $2.12\pm0.11$ & $4.56\pm0.32$ & $4.72\pm0.61$\\
\bottomrule
\end{tabular}
\end{center}
\end{table}

Variability to train-test split should be compared to sensitivity to the selection of hyperparameters reported in Table~\ref{table:sensitivity_hyperparameters}. We see that for certain cases the span of best and worse performance reaches $100\%$ which is much higher than variability to train-test split. Note that for elliptic equation grid search was performed on downsampled dataset with $32\times32$ grid.

\begin{table}[h!]
\caption{Sensitivity to train-test split for elliptic eigenproblem and Burgers equation, subspace regression trained with $L_2(A, B; z)$ loss function.}
\label{table:sensitivity_hyperparameters}
\begin{center}
\hspace*{-0.5cm}  
\begin{tabular}{lllll}
\toprule
& \multicolumn{2}{c}{elliptic, $k_1=k_2$} &  \multicolumn{2}{c}{Burgers}\\
\cmidrule(r){2-3}\cmidrule(r){4-5}
$N_{\text{subspace}}$ & \shortstack{train error \\ $[\min, \max]$, $\%$} & \shortstack{test error \\ $[\min, \max]$, $\%$}  &  \shortstack{train error \\ $[\min, \max]$, $\%$}  & \shortstack{test error \\ $[\min, \max]$, $\%$}  \\
\midrule
$10$ & $[23.81, 37.8]$ & $[28.67, 39.05]$ & $[22.01, 32.64]$ & $[23.79, 34.62]$ \\
$20$ & $[4.74, 100.0]$ & $[5.51, 100.0]$ & $[9.53, 20.14]$ & $[10.06, 21.05]$\\
$30$ & $[2.77, 8.6]$ & $[3.33, 9.06]$ & $[5.58, 21.81]$ & $[5.84, 21.7]$\\
$40$ & $[2.06, 7.05]$ & $[2.31, 7.34]$ & $[4.3, 47.43]$ &$[4.31, 283.33]$\\
\bottomrule
\end{tabular}
\end{center}
\end{table}

\end{document}